\newcommand{\norm}[1]{\left\lVert#1\right\rVert}
\newcommand{\expect}[1]{\mathbb{E}\left[{#1}\right]}
\newcommand{\given}{\; \big\vert \;} 
\newcommand{\bydef}{:=}
\newcommand{\inner}[2]{\langle #1, #2 \rangle}
\newtheorem{mytheorem}{Theorem}
\newtheorem{mylemma}{Lemma}
\newcommand{\beq}{\begin{equation}}
\newcommand{\eeq}{\end{equation}}
\newcommand{\beqn}{\begin{equation*}}
\newcommand{\eeqn}{\end{equation*}}
\newcommand{\beqa}{\begin{eqnarray}}
\newcommand{\eeqa}{\end{eqnarray}}
\newcommand{\beqan}{\begin{eqnarray*}}
\newcommand{\eeqan}{\end{eqnarray*}}
\title{\LARGE \bf
On Online Learning in Kernelized Markov Decision Processes
}
\author{Sayak Ray Chowdhury$^{1}$ and Aditya Gopalan$^{2}$% <-this % stops a space
%\thanks{*This work was not supported by any organization}% <-this % stops a space
\thanks{$^{1}$Sayak Ray Chowdhury is with the Department of Electrical Communication Engineering, Indian Institute of Science, Bengaluru, Karnataka 560012, India. 
        {\tt\small Email: sayak@iisc.ac.in}}%
\thanks{$^{2}$Aditya Gopalan is with the Department of Electrical Communication Engineering, Indian Institute of Science, Bengaluru, Karnataka 560012, India.
        {\tt\small Email: aditya@iisc.ac.in}}%
}
\begin{document}

\maketitle
\thispagestyle{empty}
\pagestyle{empty}

%%%%%%%%%%%%%%%%%%%%%%%%%%%%%%%%%%%%%%%%%%%%%%%%%%%%%%%%%%%%%%%%%%%%%%%%%%%%%%%%
\begin{abstract}

 We develop algorithms with low regret for learning episodic
      Markov decision processes based on kernel approximation techniques. The algorithms are based on both
      the Upper Confidence Bound (UCB) as well as Posterior or
      Thompson Sampling (PSRL) philosophies, and work in the
      general setting of continuous state and action spaces when
      the true unknown transition dynamics are assumed to have
      smoothness induced by an appropriate Reproducing Kernel
      Hilbert Space (RKHS). 
%      We also outline approaches to lighten
%      the computation burden of the posterior distribution
%      computation using techniques from kernel approximation.  

\end{abstract}

\section{INTRODUCTION}
%emphasize: 
%- continuous state/action MDPs
%- first regret bounds for online learning in kernelized MDPs
%- result 1: Bayesian regret for GP-UCRL
%- result 2: frequentist regret for GP-UCRL
%- result 3: Bayesian regret for Phi-PSRL 3b: Bayesian regret for GP-PSRL
%- multi-dimensional state and action spaces

% talk about model-free/model based RL?

The goal of reinforcement learning (RL) is to learn optimal behavior by repeated interaction with an unknown environment, usually modeled as a Markov Decision Process (MDP). Performance is typically measured by the amount of interaction, in terms of episodes or rounds, needed to arrive at an optimal (or near-optimal) policy; this is also known as the sample complexity of RL \cite{StrLiLit09:PACMDP}. The sample complexity objective encourages efficient exploration across states and actions, but, at the same time, is indifferent to the reward earned during the learning phase. 

A related, but different, goal in RL is the {\em online} one, i.e., to learn to gather high cumulative reward, or to equivalently keep the learner's {\em regret} (the gap between its and the optimal policy's net reward) as low as possible. This is preferable in settings where experimentation comes at a premium and the reward earned in each round is of direct value, e.g., recommender systems (in which rewards correspond to click-through events and ultimately translate to revenue), dynamic pricing -- in general, control of unknown dynamical systems with instantaneous costs. 
%imposes tighter constraints on the amount of exploration than would usually hold in the sample complexity setting;

% structure for faster learning

A primary challenge in RL is to learn efficiently across complex (very large or infinite) state and action spaces. In the most general {\em tabula rasa} MDP setting, the learner must explore each state-action transition before developing a reasonably clear understanding of the environment, which is prohibitive for large problems. Real-world domains, though, possess more structure: transition and reward behavior often varies smoothly over states and actions, making it possible to generalize via inductive inference -- observing a state transition or reward is informative of other, similar transitions or rewards. Scaling RL to large, complex, real-world domains  requires exploiting regularity structure in the environment, which has  typically been carried out via the use of parametric MDP models in model-based approaches, e.g., \cite{osband2014model}. 

This paper takes a step in developing theory and algorithms for online RL in environments with smooth transition and reward structure. We specifically consider the episodic online learning problem in the nonparametric, kernelizable MDP setting, i.e., of minimizing regret (relative to an optimal finite-horizon policy) in MDPs with continuous state and action spaces, whose transition and reward functions exhibit smoothness over states and actions compatible with the structure of a reproducing kernel. We develop variants of the well-known UCRL and posterior sampling algorithms for MDPs with continuous state and action spaces, and show that they enjoy sublinear, finite-time regret bounds when the  mean transition and reward functions are assumed to belong to the associated Reproducing Kernel Hilbert Space (RKHS) of functions. 

Our results bound the regret of the algorithms in terms of a novel generalization of the information gain of the state transition and reward function kernels, from the memoryless kernel bandit setting \cite{srinivas2009gaussian} to the state-based kernel MDP setting, and help shed light on how the choice of kernel model influences regret performance. We also leverage two different kernel approximation techniques, namely the Quadrature Fourier Features (QFF) approximation \cite{mutny2018efficient} and the Nystr\"{o}m approximation \cite{calandriello2019gaussian}, to prove the results in the paper. To the best of our knowledge, these are the first concrete regret bounds for RL under kernel approximation, explicitly showing the dependence of regret on kernel structure.

Our results represent a generalization of several streams of work. We generalize online learning in the kernelized bandit setting \cite{valko2013finite,pmlr-v70-chowdhury17a} to kernelized MDPs, and {\em tabula rasa} online learning approaches for MDPs such as Upper Confidence Bound for Reinforcement Learning (UCRL) \cite{jaksch2010near} and Posterior Sampling for Reinforcement Learning (PSRL) \cite{osband2013more,ouyang2017learning} to MDPs with kernel structure. Our results can also generalize regret minimization for an episodic variant of the well-known parametric Linear Quadratic Regulator (LQR) problem \cite{abbasi2011regret,abbasi2015bayesian,ibrahimi2012efficient,abeille2017thompson} to its nonlinear, nonparametric, infinite-dimensional, kernelizable counterpart.

\section{RELATED WORK}
Regret minimization has been studied with parametric MDPs in \cite{jaksch2010near,osband2013more,GopMan15:MDP,agrawal2017optimistic} etc. For online regret minimization in complex MDPs, apart from the work of \cite{osband2014model}, \cite{ortner2012online} and \cite{lakshmanan2015improved} consider continuous state spaces with Lipschitz transition dynamics but unstructured, finite action spaces. 
As regards using the Gaussian process (GP) framework to model nonlinear, smooth MDP structure to achieve generalization, \cite{deisenroth2011pilco} develop a policy search method for MDPs with GP-based dynamics, but without provable exploration guarantees, whereas approaches in \cite{jung2010gaussian,grande2014sample} has been shown to be sample efficient.
\cite{grande2014computationally} develops a PAC learning algorithm by considering separate GP priors over the mean reward function, transition kernel and optimal Q-function, with a PAC guarantee. The most closely related work of ours is \cite{chowdhury2019online} which consider GP based variants of UCRL and PSRL algorithms and show no-regret guarantees for kernelizable MDPs. Our results, in a way, show that we can achieve the same guarantees as in \cite{chowdhury2019online} even with kernel approximations.

\section{PROBLEM STATEMENT}
\label{sec:prob-stat}

We consider the problem of learning to optimize reward in an unknown finite-horizon MDP, $M_\star=\lbrace \cS,\cA,R_\star,P_\star,H\rbrace$, over repeated episodes of interaction. Here, $\cS \subset \Real^m$ represents the state space, 
$\cA \subset \Real^n$ the action space, $H$ the episode length, $R_\star(s,a)$ the reward distribution over $\Real$, and $P_\star(s,a)$ the transition distribution over $\cS$. At each period $h = 1,2,\ldots,H $ within an episode, an agent observes a state $s_h \in \cS$, takes an action $a_h \in \cA$, observes a reward $r_h \sim R_\star(s_h,a_h)$, and causes the MDP to transition to a next state $s_{h+1}\sim P_\star(s_h,a_h)$. We assume that the agent, while not possessing knowledge of the reward and transition distribution $R_\star,P_\star$ of the unknown MDP $M_\star$, knows $\cS$, $\cA$ and $H$.

A policy $\pi : \cS \times \lbrace 1,2,\ldots,H \rbrace \ra \cA$ is defined to be a mapping from a state $s\in \cS$ and a period $1 \le h \le H$ to an action $a \in \cA$.
For any MDP $M=\lbrace \cS,\cA,R_M,P_M,H\rbrace$ and policy $\pi$, the finite horizon, undiscounted, value function for every state $s \in \cS$ and every period $1 \le h \le H$ is defined as
\beqn
V^M_{\pi,h}(s)\bydef \mathbb{E}_{M,\pi}\Big[\sum_{j=h}^{H}\overline{R}_M(s_j,a_j)\given s_h=s\Big],
\eeqn
where the subscript $\pi$ indicates the application of the learning policy $\pi$, i.e., $a_j=\pi(s_j,j)$, and the subscript $M$ explicitly references the MDP environment $M$, i.e.,  $s_{j+1} \sim P_M(s_j,a_j)$, for all $j=h,\ldots,H$. 

We use $\overline{R}_M(s,a)=\expect{r\given r \sim R_M(s,a)}$ to denote the mean of the reward distribution $R_M(s,a)$ that corresponds to playing action $a$ at state $s$ in the MDP $M$. We can view a sample $r$ from the reward distribution $R_M(s,a)$ as $r=\overline{R}_M(s,a)+\epsilon_R$, where $\epsilon_R$ denotes a sample of zero-mean, real-valued additive noise. Similarly, the transition distribution $P_M(s,a)$ can also be decomposed as a mean value $\overline{P}_M(s,a)$ in $\Real^m$ plus a zero-mean additive noise $\epsilon_P$ in $\Real^m$ so that $s'=\overline{P}_M(s,a)+\epsilon_P$ lies in\footnote{\cite{osband2014model} argue that the assumption $\cS \subset \Real^m$ is not restrictive for most practical settings.} $\cS \subset \Real^m$. A policy $\pi_M$ is said to be optimal for the MDP $M$ if 
\beqn
V_{\pi_M,h}^M(s)=\max_{\pi}V_{\pi,h}^M(s) \; \forall s \in \cS, \forall h \in \lbrace 1,\ldots,H \rbrace.
\eeqn

At the beginning of each episode $l$, an RL algorithm chooses a policy $\pi_l$ depending upon the observed state-action-reward sequences upto episode $l-1$, denoted by the history $\cH_{l-1} \bydef \lbrace s_{j,k},a_{j,k},r_{j,k},s_{j,k+1}\rbrace_{1 \le j \le l-1,1 \le k \le H}$, and executes it for the entire duration of the episode. In other words, at each period $h$ of the $l$-th episode, the learning algorithm chooses action $a_{l,h} = \pi_l(s_{l,h},h)$, receives reward $r_{l,h} = \overline{R}_\star(s_{l,h},a_{l,h}) + \epsilon_{R,l,h}$ and observes the next state $s_{l,h+1}=\overline{P}_\star(s_{l,h},a_{l,h}) + \epsilon_{P,l,h}$. The goal of an episodic online RL algorithm is to maximize its cumulative reward across episodes, or, equivalently, minimize its cumulative {\em regret}: the loss incurred in terms of the value function due to not knowing the optimal policy $\pi_\star \bydef \pi_{M_\star}$ of the unknown MDP $M_\star$ beforehand and instead using the policy $\pi_l$ for each episode $l$, $l = 1, 2, \ldots$. The cumulative (expected) regret of an RL algorithm $\pi=\lbrace \pi_1,\pi_2,\ldots \rbrace$ upto time horizon $T=\tau H$ is defined as
\beqn
\cR(T)=\sum_{l=1}^{\tau} \Big[V^{M_\star}_{\pi_\star,1}(s_{l,1})-V^{M_\star}_{\pi_l,1}(s_{l,1})\Big],
\eeqn
where the initial states $s_{l,1},l \ge 1$ are assumed to be fixed.

For the rest of the paper, unless otherwise specified, we define $\cZ \bydef \cS \times \cA$, $z \bydef (s,a)$, $z' \bydef (s',a')$ and $z_{l,h} \bydef (s_{l,h},a_{l,h})$ for all $l \ge 1$ and $1 \le h \le H$. 

\section{ASSUMPTIONS}
\label{sec:assumptions}
\subsection{Smoothness of Value Function} 
For an MDP $M$, a distribution $\phi$ over $\cS$ and period $1 \le h \le H$, we define the one step future value function as the expected value of the optimal policy $\pi_M$, with the next state distributed according to $\phi$, i.e. $
U_h^M(\phi) \bydef \mathbb{E}_{s' \sim \phi}\Big[V^M_{\pi_M,h+1}(s')\Big]$. We assume the following regularity condition on the future value function of any MDP (also made by \cite{osband2014model}).
For any two single-step transition distributions $\phi_1, \phi_2$ over $\cS$, and $1 \le h \le H$,
\beq
\label{eqn: lipschitz}
\abs{U_h^M(\phi_1)-U_h^M(\phi_2)} \le L_M \norm{\overline{\phi}_1-\overline{\phi}_2}_2,
\eeq
where $\overline{\phi}\bydef \mathbb{E}_{s' \sim \phi}[s'] \in \cS$ denotes the mean of the distribution $\phi$. In other words, the one-step future value functions for each period $h$ are Lipschitz continuous with respect to the $\norm{\cdot}_2$-norm of the mean\footnote{Assumption (\ref{eqn: lipschitz}) is essentially equivalent to assuming knowledge of the centered state transition noise distributions, since it implies that any two transition distributions with the same means are identical.}, with global Lipschitz constant $L_M$. We also assume that there is a known constant $L$ such that $L_{M_\star} \le L$.

\subsection{Smoothness of Mean Reward and Transition Functions} 
Attaining sub-linear regret is impossible in general for arbitrary reward and transition distributions, and thus some regularity assumptions are needed. In this paper, we assume smoothness for the mean reward function $\overline{R}_\star:\cZ \ra \Real$ is induced by the structure of a kernel on $\cZ$. Specifically, we make the standard assumption of a p.s.d. kernel $k_R: \cZ \times \cZ \to \mathbb{R}$ such that $k_R(z,z) \le 1$ for all $z \in \cZ$, and $\overline{R}_\star$ being an element of the reproducing kernel Hilbert space (RKHS) $\cH_{k_R}(\cZ)$ of smooth real valued functions on $\cZ$. An RKHS of real-valued functions $\cX \to \Real$, denoted by $\cH_k(\cX)$, is completely specified by its kernel function $k(\cdot,\cdot)$ and vice-versa, with an inner product $\inner{\cdot}{\cdot}_k$ obeying the reproducing property $f(x)=\inner{f}{k(x,\cdot)}_k$ for all $f \in \cH_k(\cX)$. The induced RKHS norm $\norm{f}_k = \sqrt{\inner{f}{f}}_k$ is a measure of  smoothness of $f$ with respect to the kernel function $k$.
We assume that the RKHS norm of $\overline{R}_\star$ is bounded, i.e., $\norm{\overline{R}_\star}_{k_R} \le B_R$ for some $B_R < \infty$. Boundedness of $k_R$ along the diagonal holds for any stationary kernel, i.e., where $k_R(z,z')=k_R(z-z')$, e.g., the \textit{Squared Exponential} kernel $k_{\text{SE}}$ and the \textit{Mat$\acute{e}$rn} kernel $k_{\text{Mat\'{e}rn}}$:
\beqan
k_{\text{SE}}(z,z') &=& \exp\left(-\frac{r^2}{2l^2}\right)\\ 
k_{\text{Mat\'{e}rn}}(z,z') &=& \frac{2^{1-\nu}}{\Gamma(\nu)}\left(\frac{r\sqrt{2\nu}}{l}\right)^\nu B_\nu\left(\frac{r\sqrt{2\nu}}{l}\right),
\eeqan
where $l > 0$ and $\nu > 0$ are hyperparameters of the kernels, $r = \norm{z-z'}_2 \bydef \sqrt{\norm{s-s'}^2_2+\norm{a-a'}^2_2}$ is the distance between $z$ and $z'$, and $B_\nu(\cdot)$ is the modified Bessel function.

Similarly, we assume smoothness for the mean transition function $\overline{P}_\star(z) := [\overline{P}_\star(z,1) \, \ldots \, \overline{P}_\star(z,m)]^T$ is induced by a p.s.d. kernel $k_P:\bar{\cZ} \times \bar{\cZ} \ra \Real$ in the sense that $\overline{P}_\star$ is an element of the RKHS $\cH_{k_P}(\bar{\cZ})$ of smooth real valued functions on $\bar{\cZ}\bydef \cZ \times \lbrace 1,\ldots,m \rbrace$. Moreover, we assume that $k_P((z,i),(z,j) \le 1$ for all $z \in \cZ$ and $1 \le i,j \le m$, and the RKHS norm of $\overline{P}_\star$ is bounded, i.e., $\norm{\overline{P}_\star}_{k_P} \le B_P$ for some $B_P < \infty$.

\subsection{Sub-Gaussian Noise Variables} We assume that the random variables $\left\lbrace\epsilon_{R,l,h}\right\rbrace_{l\ge 1, 1 \le h \le H}$ is conditionally zero-mean and $\sigma_R$-sub-Gaussian, i.e., there exists a known $\sigma_R > 0$ such that for any $\lambda \in \Real$, 
\beq
\label{eqn:noise-reward}
 \expect{\exp(\lambda\;\epsilon_{R,l,h})\given \cF_{R,l,h-1}} \le \exp\big(\lambda^2\sigma_R^2/2\big), 
\eeq
where $\cF_{R,l,h-1}$ is the sigma algebra generated by the random variables $\lbrace s_{j,k},a_{j,k},\epsilon_{R,j,k}\rbrace_{1 \le j\le l-1,1 \le k \le H}$, $\lbrace s_{l,k},a_{l,k},\epsilon_{R,l,k}\rbrace_{1 \le k \le h-1}$, $s_{l,h}$ and $a_{l,h}$. Similarly, the random variables $\left\lbrace\epsilon_{P,l,h}\right\rbrace_{l\ge 1,1 \le h \le H}$ is assumed to be conditionally component-wise independent, zero-mean and $\sigma_P$-sub-Gaussian, in the sense that there exists a known $\sigma_P > 0$ such that for any $\lambda \in \Real$ and $1 \le i \le m$,
\beq
\label{eqn:noise-state}
\begin{aligned}
\expect{\exp\big(\lambda \epsilon_{P,l,h}(i)\big)\given \cF_{P,l,h-1}} &\le \exp\big(\lambda^2\sigma_P^2/2\big),\\
\expect{\epsilon_{P,l,h}\epsilon_{P,l,h}^T\given \cF_{P,l,h-1}} &=I_m,
\end{aligned}
\eeq
where $\cF_{P,l,h-1}$ is the sigma algebra generated by the random variables $\lbrace s_{j,k},a_{j,k},\epsilon_{P,j,k}\rbrace_{1 \le j\le l-1,1 \le k \le H}$, $\lbrace s_{l,k},a_{l,k},\epsilon_{P,l,k}\rbrace_{1 \le k \le h-1}$, $s_{l,h}$ and $a_{l,h}$ and $I_m$ denotes the identity matrix of rank $m$.

%$\pi_M = \argmax_{\pi \in \Pi}V_{\pi,1}^M(s)$ for all $s \in \cS$, where $\Pi$ denotes the set of all possible policies.
%Further, we use $I_d$ to denote the $d \times d$ identity matrix.
%Further, in Section \ref{sec:Algo} and \ref{sec:Results}, we will assume that states are scalar, i.e., $m=1$, and devote section \ref{sec:Multiple} exclusively for the multi-dimensional state space setup ($m \ge 1$). 
% \footnote{Two widely used kernels are \textit{Squared Exponential}: $k_{SE}(z,z') = \exp\Big(-d^2/2l^2\Big)$ and \textit{Mat$\acute{e}$rn}: $k_{Mat\acute{e}rn}(z,z') = \frac{2^{1-\nu}}{\Gamma(\nu)}\Big(\frac{d\sqrt{2\nu}}{l}\Big)^\nu B_\nu\Big(\frac{d\sqrt{2\nu}}{l}\Big)$, where $l > 0$ and $\nu > 0$ are hyperparameters, $d = \norm{z-z'}_2$ encodes the similarity between two points $z,z'$, and $B_\nu(\cdot)$ is the modified Bessel function.}
\section{ALGORITHM}
\subsection{Kernel Approximation}
For kernelized MDPs \cite{chowdhury2019online} develop variants of the UCRL2 algorithm which, at every episode $l$, constructs confidence sets for the mean reward and the mean transition functions. The construction of each confidence set require one inversion of the kernel (gram) matrix, which takes $O(l^3)$ time. This makes the algorithm quite prohibitive for large number of episodes. To reduce this computational cost without compromising on the accuracy of the confidence sets, we incorporate two efficient kernel approximation schemes, namely the Quadrature Fourier Features (QFF) approximation \cite{mutny2018efficient} and the Nystr\"{o}m approximation \cite{yang2012nystrom}.
\subsubsection{Quadrature Fourier Features (QFF) approximation}

If $k$ is a bounded, continuous, positive definite, stationary kernel defined over $\cX \subset \Real^q$ and satisfies $k(x,x)=1$ for all $x \in \cX$, then by Bochner's theorem \cite{bochner1959lectures}, $k$ is the Fourier transform of a probability measure $p$, i.e., $k(x,y)=\int_{\Real^q}p(\omega)\cos(\omega^T(x-y))d\omega$.
%\beq
%k(x,y)\stackrel{(a)}{=}\int_{\Real^d}p(\omega)e^{i\omega^T(x-y)}d\omega \stackrel{(b)}{=} \int_{\Real^d}p(\omega)\cos(\omega^T(x-y))d\omega.
%\label{eqn:Bochner}
%\eeq
%Here in $(a)$, we have assumed that $k(x,x)=k(0)=1$ w.l.o.g. and in $(b)$, we have used the fact that $k$ is real-valued and symmetric. 
For the Squared Exponential kernel defined over $\cX \subset \Real^q$, this measure has density $p(\omega) = \big(\frac{l}{\sqrt{2\pi}}\big)^q e^{-\frac{l^2\norm{\omega}_2^2}{2}}$ (abusing notation for measure and density).
\cite{mutny2018efficient} show that for any stationary kernel $k$ on $\Real^q$ whose inverse Fourier transform decomposes product wise, i.e., $p(\omega)=\prod_{j=1}^{q}p_j(\omega_j)$, we can 
use Gauss-Hermite quadrature \cite{hildebrand1987introduction} to approximate it. If $\cX=[0,1]^q$, the SE kernel is approximated as follows. Choose $\bar{d} \in \mathbb{N}$ and $d=(\bar{d})^q$, and construct the $2d$-dimensional feature map
\beq
 \tilde{\phi}(x)_i=\begin{cases}
    \sqrt{\nu(\omega_{i})} \cos\left(\frac{\sqrt{2}}{l}\omega_i^Tx\right) & \text{if}\; 1 \le i \le d,\\
    \sqrt{\nu(\omega_{i-m})} \sin\left(\frac{\sqrt{2}}{l}\omega_{i-m}^Tx\right) & \text{if}\; d+1 \le i \le 2d.
  \end{cases}
  \label{eqn:qff-embedding}
\eeq
Here the set $\lbrace \omega_1,\ldots,\omega_d \rbrace= \overbrace{A_{\bar{d}} \times\cdots \times A_{\bar{d}}}^{\text{$q$ times}}$, where $A_{\bar{d}}$ is the set of $\bar{d}$ (real) roots of the $\bar{d}$-th Hermite polynomial $H_{\bar{d}}$, and $\nu(z)=\prod_{j=1}^{q}\frac{2^{\bar{d}-1}\bar{d}!}{\bar{d}^2H_{\bar{d}-1}(z_j)^2}$ for all $z \in \Real^q$. 

%\textit{\textbf{Remark.}} The seminal work of \cite{rahimi2008random} that develops random Fourier feature (RFF) approximation of 
%  any stationary kernel is based on the feature map
%$
%\tilde{\phi}(x)=\frac{1}{\sqrt{d}}[\cos(\omega_1^Tx),\ldots,\cos(\omega_d^Tx),\sin(\omega_1^Tx),\ldots,\sin(\omega_d^Tx)]^T$, where each $\omega_i$ is sampled independently from $p(\omega)$.
%However, RFF embeddings do not appear to be useful for our purpose of achieving sublinear regret, so we work with the QFF embedding.
%such that $\tilde{\phi}(x)^T\tilde{\phi}(y)=\frac{1}{m}\sum_{i=1}^{m}\cos(\omega_i^T(x-y))$. 
%\textbf{(b) Quadrature Fourier features (QFF) embedding}:
%RFF approximation of kernels use Monte Carlo approximation of the integral in \ref{eqn:Bochner}. In contrast,  

\subsubsection{Nystr\"{o}m approximation} 

Unlike the QFF approximation where the basis functions (cosine and sine) do not depend on the data, the basis functions used by
the Nystr\"{o}m method are data dependent. For a set of points $\lbrace x_1,\ldots,x_t \rbrace \subset \cX$, the Nystr\"{o}m method \cite{yang2012nystrom} approximates a kernel $k:\cX \times \cX \ra \Real$ as follows: First, randomly sample $d$ points to construct a dictionary $\cD =\lbrace x_{i_1},\ldots,x_{i_{d}}\rbrace ; i_j\in [t]$, according to the following distribution. For each $i \in [t]$, include $x_i$ in $\cD$ independently with some suitably chosen probability $p_{i}$. ($p_i$'s trade off between the quality and the size of the embedding.) Then, compute the (approximate) $d$-dimensional feature embedding  $\tilde{\phi}(x)=\left(K_{\cD}^{1/2}\right)^{\dagger}k_{\cD}(x)$, where $K_{\cD}=[k(u,v)]_{u,v \in \cD}$, $k_{\cD}(x)=[k(x_{i_1},x),\ldots,k(x_{i_{d}},x)]^T$  and $A^{\dagger}$ denotes the pseudo inverse of any matrix $A$.

Now we will present our algorithm Kernel-UCRL using the Nystr\"{o}m approximation. The description and performance of Kernel-UCRL using the quadrature Fourier features approximation is deferred to the Appendix.

\subsection{Kernel-UCRL Algorithm under Nystr\"{o}m Approximation}

Kernel-UCRL (Algorithm \ref{algo:Kernel-UCRL}) is an optimistic algorithm based on the Upper Confidence Bound principle, which adapts the confidence sets of UCRL2 \cite{jaksch2010near} to exploit the kernel structure. At the start of episode $l$, first we find feature embeddings $\tilde{\phi}_{R,l}: \cZ \ra \Real^{d_{R,L}}$ and $\tilde{\phi}_{P,l}: \bar{\cZ} \ra \Real^{d_{P,L}}$ to efficiently approximate the kernels $k_R$ and $k_P$, respectively. First, we construct a dictionary $\cD_{R,l}$ by including every state-action pair $z$ from the set $\cZ_{l-1} \bydef \lbrace z_{j,k} \rbrace_{1 \le j \le l-1,1 \le k \le H}=\lbrace z_{1,1},\ldots,z_{l-1,H}\rbrace$ in $\cD_{R,l}$ with probability $b_{R,l}(z)$ (to be defined later). Then, we define $\tilde{\phi}_{R,l}(z)=\left(K_{\cD_{\cR,l}}^{1/2}\right)^{\dagger}k_{\cD_{\cR,l}}(z)$. Similarly, we define $\tilde{\phi}_{P,l}(z,i)=\left(K_{\cD_{\cP,l}}^{1/2}\right)^{\dagger}k_{\cD_{\cP,l}}(z,i)$, where the dictionary $\cD_{P,l}$ is constructed by including every $(z,i)$ from the set
$\bar{\cZ}_{l-1}\bydef \big\lbrace (z_{j,k},i) \big\rbrace_{1 \le j \le l-1,1 \le k \le H, 1 \le i \le m} =\big\lbrace  (z_{1,1},1),\ldots,(z_{l-1,H},m)\big\rbrace$ with probability $b_{P,l}(z,i)$ (to be defined later).

Then we construct confidence sets $\cC_{R,l}$ and $\cC_{P,l}$ for the mean reward and the mean transition functions, respectively, as follows. First, we compute $\tilde{\theta}_{R,l-1}=\tilde{V}_{R,l-1}^{-1}\tilde{\Phi}_{R,l-1}^T R_{l-1}$, where $R_{l-1} \bydef [r_{1,1},\ldots,r_{l-1,H}]^T$ with $r_{j,k}$ being the reward of the state-action pair $z_{j,k}$, $j \in [l-1]$ and $k \in [H]$, $\tilde{\Phi}_{R,l-1}=[\tilde{\phi}_{R,l}(z_{1,1}),\ldots,\tilde{\phi}_{R,l}(z_{l-1,H})]^T$, and $\tilde{V}_{R,l-1} = \tilde{\Phi}_{R,l-1}^T\tilde{\Phi}_{R,l-1}+H I_{d_{R,l}}$. Fix any $0<\delta,\;\epsilon_R,\;\epsilon_P<1$. Now, we define $\cC_{R,l}$ to be the set of all functions $f:\cZ \ra \Real$ such that
\beq
\label{eqn:confidence-set-reward} 
\abs{f(z)-\tilde{\mu}_{R,l-1}(z)} \le \beta_{R,l}\tilde{\sigma}_{R,l-1}(z), \forall z \in \cZ,
\eeq
where $\tilde{\mu}_{R,l-1}(z) = \tilde{\phi}_{R,l}(z)^T\tilde{\theta}_{R,l-1}$, $\tilde{\sigma}_{R,l-1}^2(z)= k_R(z,z)-\tilde{\phi}_{R,l}(z)^T\tilde{\phi}_{R,l}(z)+ H\tilde{\phi}_{R,l}(z)^T \tilde{V}_{R,l-1}^{-1}\tilde{\phi}_{R,l}(z)$ and $\beta_{R,l}= \frac{\sigma_R}{\sqrt{H}}\sqrt{2\left(\ln(6/\delta)+\frac{1}{2}\ln\frac{\det(\tilde{V}_{R,l-1})}{\det (HI_{d_{R,l}})}\right)}+B_R\left(1+\frac{1}{\sqrt{1-\epsilon_R}}\right)$.

Similarly, we compute $\tilde{\theta}_{P,l-1}=\tilde{V}_{P,l-1}^{-1}\tilde{\Phi}_{P,l-1}^T S_{l-1}$, where $S_{l-1} \bydef [s_{1,2}^T,\ldots,s_{l-1,H+1}^T]^T$ with $s_{j,k+1}$ being the next state of the state-action pair $z_{j,k}$, $j \in [l-1]$ and $k \in [H]$, $\tilde{\Phi}_{P,l-1}=[\tilde{\phi}_{P,l}(z_{1,1},1),\ldots,\tilde{\phi}_{P,l}(z_{l-1,H},m)]^T$ and $\tilde{V}_{P,l-1} = \tilde{\Phi}_{P,l-1}^T\tilde{\Phi}_{P,l-1}+mH I_{d_{P,l}}$. Now, we define $\cC_{P,l}$ to be the set of all functions $f:\cZ \ra \Real^m$ such that 
\beq
\label{eqn:confidence-set-state} 
\norm{f(z)-\tilde{\mu}_{P,l-1}(z)}_2 \le \beta_{P,l}\norm{\tilde{\sigma}_{P,l-1}(z)}_2, \forall z \in \cZ,
\eeq
where $\tilde{\mu}_{P,l-1}(z)= [\tilde{\mu}_{P,l-1}(z,1),\ldots,\tilde{\mu}_{P,l-1}(z,m)]^T$, $\tilde{\mu}_{P,l-1}(z,i)= \tilde{\phi}_{P,l}(z,i)^T\tilde{\theta}_{P,l-1}$, $\tilde{\sigma}_{P,l-1}(z)= [\tilde{\sigma}_{P,l-1}(z,1),\ldots,\tilde{\sigma}_{P,l-1}(z,m)]^T$, $\tilde{\sigma}_{P,l-1}^2(z,i)= k_P((z,i),(z,i))-\tilde{\phi}_{P,l}(z,i)^T\tilde{\phi}_{P,l}(z,i)+ mH\tilde{\phi}_{P,l}(z,i)^T \tilde{V}_{P,l-1}^{-1}\tilde{\phi}_{P,l}(z,i)$ and the confidence width $\beta_{P,l}= \frac{\sigma_P}{\sqrt{mH}}\sqrt{2\left(\ln(6/\delta)+\frac{1}{2}\ln\frac{\det(\tilde{V}_{P,l-1})}{\det (mHI_{d_{P,l}})}\right)}+B_P\left(1+\frac{1}{\sqrt{1-\epsilon_P}}\right)$.

Next, we build the set $\cM_l$ of all plausible MDPs $M$ such that: $(i)$ the mean reward function $\overline{R}_M \in \cC_{R,l}$, $(ii)$ the mean transition function $\overline{P}_M \in \cC_{P,l}$ and $(iii)$ the global Lipschitz constant (of future value functions) $L_M \le L$. Finally, we select an optimistic policy $\pi_l$ for the family of MDPs $\cM_l$ in the sense that $V^{M_l}_{\pi_l,1}(s_{l,1})=\max_{\pi}\max_{M \in \cM_l}V^{M}_{\pi,1}(s_{l,1})$, where $s_{l,1}$ is the initial state and $M_l$ is the most optimistic realization from $\cM_l$,
and execute $\pi_l$ for the entire episode. The pseudo-code of kernel-UCRL is given in Algorithm \ref{algo:Kernel-UCRL}. 

\begin{algorithm}
\renewcommand\thealgorithm{1}
\caption{Kernel-UCRL}\label{algo:Kernel-UCRL}
\begin{algorithmic}
\STATE \textbf{Input:} Kernels $k_R, k_P$, parameters $B_R, B_P, \sigma_R, \sigma_P, \delta$
%parameters $L$, $0 < \delta \le 1$, $\lambda_R,\lambda_P \ge 0$.
%\STATE Set $\mu_{R,0}(z)=\mu_{P,0}(z,i)=0$, $\sigma^2_{R,0}(z)=k_R(z,z)$, $\sigma^2_{P,0}(z,i)=k_{P}\big((z,i),(z,i)\big) \forall z \in \cZ, \forall i=1,\ldots,m$.
\FOR{episode $l = 1, 2, 3, \ldots$}
%\STATE Observe initial state $s_{l,1}$.
\STATE Find feature approximations $\tilde{\phi}_{R,l}$ and $\tilde{\phi}_{P,l}$.
\STATE Construct
$\cC_{R,l}$ and $\cC_{P,l}$ according to \ref{eqn:confidence-set-reward} and \ref{eqn:confidence-set-state}
\STATE Construct the set of all plausible MDPs $\cM_l =\lbrace M : L_M \le L, \overline{R}_M \in \cC_{R,l}, \overline{P}_M \in \cC_{P,l} \rbrace$.
\STATE Choose policy  
$\pi_l$ such that $V^{M_l}_{\pi_l,1}(s_{l,1})=\max_{\pi}\max_{M \in \cM_l}V^{M}_{\pi,1}(s_{l,1})$.
\STATE \FOR{period $h=1,2,3, \ldots, H$} 
\STATE Choose action $a_{l,h} = \pi_l(s_{l,h},h)$.
\STATE Observe reward $r_{l,h} = \overline{R}_\star(z_{l,h}) + \epsilon_{R,l,h}$.
\STATE Observe next state $s_{l,h+1}=\overline{P}_\star(z_{l,h}) + \epsilon_{P,l,h}$.
\ENDFOR
%\STATE Update $\mu_{R,l}$, $\sigma_{R,l}$ using (\ref{eqn:post-reward}) and $\mu_{P,l}$, $\sigma_{P,l}$ using (\ref{eqn:post-state}).
\ENDFOR
\end{algorithmic}
\addtocounter{algorithm}{-1}
\end{algorithm}

\section{ANALYSIS OF KERNEL-UCRL UNDER NYSTR\"{O}M APPROXIMATION}

\subsection{Regret Bound of Kernel-UCRL}
Let $\sigma_{R,l}^2(z) \bydef k_R(z,z)-k_{R,l}(z)^T(K_{R,l}+H I_{lH})^{-1}k_{R,l}(z)$, where $k_{R,l}(z) \bydef [k_R(z_{1,1},z),\ldots,k_R(z_{l,H},z)]^T$ denotes the vector of kernel evaluations between $z$ and elements of $\cZ_{l}$, and $K_{R,l}$ denotes the kernel matrix computed at $\cZ_l$.

\begin{mylemma}
\label{lem:one}
For any $0<\delta,\;\epsilon_R<1$, let $\lambda_R = \frac{1+\epsilon_R}{1-\epsilon_R}$, $\eta_R = \frac{6\lambda_R\ln(12T/\delta)}{\epsilon_R^2}$ and $b_{R,l}(z) = \min \lbrace \eta_R \tilde{\sigma}^2_{R,l-1}(z), 1 \rbrace$. Then, with probability at least $1-\delta/3$, uniformly over all $z \in \cZ$ and $l \in [\tau]$, the following holds:
\beqan
\abs{\overline{R}_\star(z)-\tilde{\mu}_{R,l-1}(z)} \le \beta_{R,l}\tilde{\sigma}_{R,l-1}(z),\\
\frac{1}{\lambda_R} \sigma_{R,l-1}^2(z) \le \tilde{\sigma}_{R,l-1}^2(z) \le	\lambda_R \sigma_{R,l-1}^2(z),\\
d_{R,l} \le 6\eta_R\lambda_R  \left(1+\frac{1}{H}\right) \gamma_{(l-1)H}(R),
\eeqan
where $\gamma_t(R) \equiv \gamma_t(k_R)\bydef \max_{\cA \subset \cZ:|\cA|=t}\frac{1}{2}\ln \det(I_t+\frac{1}{H}K_{R,\cA})$.
\end{mylemma}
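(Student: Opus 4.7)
\medskip

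\noindent\textbf{Proof proposal for Lemma \ref{lem:one}.} The plan is to prove the three claims in a coupled manner, working from a single high-probability event that the Nystr\"om (ridge leverage score) sampling produces a spectrally accurate sketch of the reward kernel matrix at every episode $l \in [\tau]$. Concretely, let $K_{R,l-1}$ denote the kernel matrix on $\cZ_{l-1}$ and $\tilde K_{R,l-1} \bydef \tilde \Phi_{R,l-1}\tilde\Phi_{R,l-1}^T$ its Nystr\"om surrogate. The choice $b_{R,l}(z) = \min\{\eta_R\tilde\sigma_{R,l-1}^2(z),1\}$ is (an upper bound on) the $H$-ridge leverage score with respect to the current dictionary, so by the standard ridge leverage score guarantee (Calandriello et al., 2019), with probability at least $1-\delta/3$, uniformly over $l \in [\tau]$,
\beq
\label{eqn:spec}
(1-\epsilon_R)(K_{R,l-1}+HI) \preceq \tilde K_{R,l-1}+HI \preceq (1+\epsilon_R)(K_{R,l-1}+HI),
\eeq
and simultaneously $d_{R,l} \le \sum_{z \in \cZ_{l-1}} \eta_R \tilde\sigma_{R,l-1}^2(z)$. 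Everything below is conditional on this event.

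First I would translate \eqref{eqn:spec} into the variance inequality (claim 2). Since $\tilde\sigma_{R,l-1}^2(z)$ is the $H$-ridge posterior variance at $z$ under the Nystr\"om embedding and $\sigma_{R,l-1}^2(z)$ is that under the exact kernel, a direct manipulation of the Schur-complement/Woodbury identity for these posterior variances, together with \eqref{eqn:spec}, yields $\frac{1}{\lambda_R}\sigma_{R,l-1}^2(z)\le \tilde\sigma_{R,l-1}^2(z) \le \lambda_R\sigma_{R,l-1}^2(z)$ with $\lambda_R = (1+\epsilon_R)/(1-\epsilon_R)$. This is the lever we use twice more below.

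Next I would prove the confidence bound (claim 1) by decomposing $\overline R_\star(z)-\tilde\mu_{R,l-1}(z)$ into (i) a \emph{variance/noise} term $\tilde\phi_{R,l}(z)^T\tilde V_{R,l-1}^{-1}\tilde\Phi_{R,l-1}^T\epsilon_{R,1:l-1}$ coming from the sub-Gaussian reward noise, and (ii) a \emph{bias} term measuring how well $\overline R_\star$ is represented in the Nystr\"om subspace. For (i) I would apply the self-normalized Hilbert-space inequality of Chowdhury and Gopalan (2017) (see their Theorem~1) specialized to the finite-dimensional feature $\tilde\phi_{R,l}$, using the $\sigma_R$-sub-Gaussian assumption \eqref{eqn:noise-reward}, which produces precisely the $\frac{\sigma_R}{\sqrt H}\sqrt{2\ln(6/\delta)+\ln\det(\tilde V_{R,l-1})/\det(HI_{d_{R,l}})}$ piece of $\beta_{R,l}$, multiplied by $\tilde\sigma_{R,l-1}(z)$. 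For (ii), the reproducing property and $\|\overline R_\star\|_{k_R}\le B_R$ give $|\overline R_\star(z) - \tilde\phi_{R,l}(z)^T\theta^\star_l| \le B_R\,\tilde\sigma_{R,l-1}(z)/\sqrt{1-\epsilon_R}$ after using \eqref{eqn:spec} to control how much of $\overline R_\star$ is lost by projection onto the Nystr\"om dictionary, plus a further additive $B_R\tilde\sigma_{R,l-1}(z)$ accounting for the regularization bias $HI$ in $\tilde V_{R,l-1}$. Summing the two contributions yields exactly the stated $\beta_{R,l}\tilde\sigma_{R,l-1}(z)$.

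Finally, for the dictionary-size bound (claim 3), I would combine the spec-event bound $d_{R,l}\le \eta_R\sum_{z\in\cZ_{l-1}}\tilde\sigma_{R,l-1}^2(z)$ with claim 2 to get $d_{R,l}\le \eta_R\lambda_R\sum_{z\in\cZ_{l-1}}\sigma_{R,l-1}^2(z)$, then apply the standard elliptic-potential / information-gain inequality $\sum_{z\in\cZ_{l-1}}\sigma_{R,l-1}^2(z)\le 2(1+1/H)\gamma_{(l-1)H}(R)$ (an adaptation of Lemma~5.4 of Srinivas et al., 2010, with noise variance $H$), producing the claimed $6\eta_R\lambda_R(1+1/H)\gamma_{(l-1)H}(R)$ up to the factor $3$ absorbed in the definition of $\eta_R$. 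A union bound over the three $\delta/3$ failure events (spectral approximation, self-normalized concentration, potential-function bound across episodes) closes the argument.

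The main obstacle I anticipate is step (ii) of the confidence-bound argument: bounding the representation error of an infinite-dimensional RKHS element by a Nystr\"om sketch in a way that is \emph{pointwise} proportional to $\tilde\sigma_{R,l-1}(z)$ rather than to $\|\tilde\phi_{R,l}(z)\|$. Handling this cleanly requires expressing $\overline R_\star$ via its kernel-matrix representer on $\cZ_{l-1}$, plugging in \eqref{eqn:spec}, and carefully tracking the $(1-\epsilon_R)^{-1/2}$ factor; everything else is essentially a bookkeeping exercise around standard kernelized regression tools.
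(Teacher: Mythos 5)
Your proposal follows essentially the same route as the paper's proof: the same decomposition of $\overline{R}_\star(z)-\tilde{\mu}_{R,l-1}(z)$ into a sub-Gaussian noise term handled by a self-normalized concentration inequality (yielding the $\frac{\sigma_R}{\sqrt{H}}\sqrt{2(\ln(6/\delta)+\tfrac12\ln\det(\tilde V_{R,l-1})/\det(HI_{d_{R,l}}))}$ piece) and a representation-bias term bounded by $B_R(1+1/\sqrt{1-\epsilon_R})\tilde\sigma_{R,l-1}(z)$, together with the ridge-leverage-score spectral guarantee of Calandriello et al.\ for the variance equivalence and dictionary-size bound. The only difference is that the paper simply cites these last two ingredients (Theorem~1 of Calandriello et al.\ and Equation~25 of Chowdhury et al.) where you sketch their derivations, so your argument is correct and matches the paper's.
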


\begin{proof}
First we define
$\tilde{\alpha}_{R,l-1}(z)\bydef\tilde{\phi}_{R,l}(z)^T\tilde{V}_{R,l-1}^{-1}\tilde{\Phi}_{R,l-1}^T \overline{R}_{\star,l-1}$, where $\overline{R}_{\star,l-1} \bydef [\overline{R}_\star(z_{1,1}),\ldots,\overline{R}_\star(z_{l-1,H})]^T$ denotes the mean reward vector. By Cauchy-Schwartz inequality
$\abs{\tilde{\mu}_{R,l-1}(z)-\tilde{\alpha}_{R,l-1}(z)} \le   \norm{\tilde{\theta}_{R,l-1}-\tilde{V}_{R,l-1}^{-1}\tilde{\Phi}_{R,l-1}^T \overline{R}_{\star,l-1}}_{\tilde{V}_{R,l-1}} \norm{\tilde{\phi}_{R,l}(z)}_{\tilde{V}_{R,l-1}^{-1}}$. Now by the construction of $\tilde{\phi}_{R,l}$, we have $k_R(z,z) \ge \tilde{\phi}_{R,l}(z)^T\tilde{\phi}_{R,l}(z)$, and thus, in turn, $\tilde{\sigma}_{R,l-1}^2(z) \ge H \norm{\tilde{\phi}_{R,l}(z)}^2_{\tilde{V}_{R,l-1}^{-1}}$. This implies $\abs{\tilde{\mu}_{R,l-1}(z)-\tilde{\alpha}_{R,l-1}(z)} \le  \frac{1}{\sqrt{H}}\norm{\tilde{V}_{R,l-1}^{-1}\tilde{\Phi}_{R,l-1}^T \epsilon_{R,l-1}}_{\tilde{V}_{R,l-1}}\tilde{\sigma}_{R,l-1}(z)$, where $\epsilon_{R,l-1}\bydef [\epsilon_{R,1,1},\ldots,\epsilon_{R,l-1,H}]^T$ denotes the vector of reward noise variables. Now see that $\norm{\tilde{V}_{R,l-1}^{-1}\tilde{\Phi}_{R,l-1}^T \epsilon_{R,l-1}}_{\tilde{V}_{R,l-1}}=\norm{\tilde{\Phi}_{R,l-1}^T \epsilon_{R,l-1}}_{\tilde{V}_{R,l-1}^{-1}}$. Then by \cite[Theorem 1]{abbasi2011improved}, for any $\delta \in (0,1)$, with probability at least $1-\delta/6$, uniformly over all $z \in \cZ$ and $l \ge 1$,
\beqa
\label{eqn:reward1}
\begin{aligned}
&\abs{\tilde{\mu}_{R,l-1}(z)-\tilde{\alpha}_{R,l-1}(z)}\\ 
&\le \frac{\sigma_R}{\sqrt{H}}\sqrt{2\left(\ln(6/\delta)+\frac{1}{2}\ln\frac{\det(\tilde{V}_{R,l-1})}{\det (HI_{d_{R,l}})}\right)}\tilde{\sigma}_{R,l-1}(z).
\end{aligned}
\eeqa
Further, from \cite[Theorem 1]{calandriello2019gaussian}, with probability at least $1-\delta/6$, uniformly over all $z \in \cZ$ and $l \in [\tau]$, we have $\frac{1}{\lambda_R} \sigma_{R,l-1}^2(z) \le \tilde{\sigma}_{R,l-1}^2(z) \le	\lambda_R \sigma_{R,l-1}^2(z)$ and $d_{R,l} \le 6\eta_R\lambda_R  \left(1+\frac{1}{H}\right) \gamma_{(l-1)H}(R)$. Then, from \cite[Equation 25]{chowdhury2019bayesian}, we have
\beq
\label{eqn:reward2}
\abs{\overline{R}_\star(z)-\tilde{\alpha}_{R,l-1}(z)} \le B_R\left(1+\frac{1}{\sqrt{1-\epsilon_R}}\right)\tilde{\sigma}_{R,l-1}(z).
\eeq
Now the result follows by combining \ref{eqn:reward1} and \ref{eqn:reward2}, and applying an union bound.
\end{proof}

Let $\sigma_{P,l}^2(z,i) \bydef k_p((z,i),(z,i))-k_{P,l}(z,i)^T(K_{P,l}+mH I_{mlH})^{-1}k_{P,l}(z,i)$ where $k_{P,l}(z,i) \bydef [k_P((z_{1,1},1),(z,i)),\ldots,k_P((z_{l,H},m),(z,i))]^T$ denotes the vector of kernel evaluations between $(z,i)$ and elements of $\bar{\cZ}_{l}$, and 
$K_{P,l}$ denotes the kernel matrix computed at $\bar{\cZ}_{l}$.

\begin{mylemma}
\label{lem:two}
For any $0<\delta,\;\epsilon_P<1$, let
$\lambda_P = \frac{1+\epsilon_P}{1-\epsilon_P}$, $\eta_P = \frac{6\lambda_P\ln(12T/\delta)}{\epsilon_P^2}$ and $b_{P,l}(z,i) = \min \lbrace \eta_P \tilde{\sigma}^2_{P,l-1}(z,i), 1 \rbrace$. Then, with probability at least $1-\delta/3$, uniformly over all $z \in \cZ$ and $l \in [\tau]$, the following holds:
\beqan
\norm{\overline{P}_\star(z)-\tilde{\mu}_{P,l-1}(z)}_2 \le \beta_{P,l}\norm{\tilde{\sigma}_{P,l-1}(z)}_2,\label{eqn:transition1}\\
\frac{1}{\lambda_P} \norm{\sigma_{P,l-1}(z)}_2^2 \le \norm{\tilde{\sigma}_{P,l-1}(z)}_2^2 \le	\lambda_P \norm{\sigma_{P,l-1}(z)}_2^2,\label{eqn:transition2}\\
d_{P,l} \le 6\eta_P\lambda_P \left(1+\frac{1}{mH}\right)\; \gamma_{m(l-1)H}(P),\label{eqn:transition3}
\eeqan
where $\sigma_{P,l-1}(z)\bydef [\sigma_{P,l-1}(z,1),\ldots,\sigma_{P,l-1}(z,m)]^T$,
and $\gamma_t(P) \equiv \gamma_t(k_P)\bydef \max_{\cA \subset \bar{\cZ}:|\cA|=t}\frac{1}{2}\ln \det(I_t+\frac{1}{mH}K_{P,\cA})$.
\end{mylemma}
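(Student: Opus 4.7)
The plan is to mirror the proof of Lemma \ref{lem:one}, but lifted to the extended domain $\bar{\cZ} = \cZ \times \lbrace 1,\ldots,m \rbrace$ on which the transition kernel $k_P$ lives, and then to reassemble the $m$ coordinate-wise bounds into a vector $2$-norm bound. First I would define, in analogy with $\tilde{\alpha}_{R,l-1}$, the quantity $\tilde{\alpha}_{P,l-1}(z,i) \bydef \tilde{\phi}_{P,l}(z,i)^T \tilde{V}_{P,l-1}^{-1} \tilde{\Phi}_{P,l-1}^T \overline{P}_{\star,l-1}$, where $\overline{P}_{\star,l-1}$ stacks the observed-state means $\overline{P}_\star(z_{j,k},i)$ in the same order as the rows of $\tilde{\Phi}_{P,l-1}$. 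Applying Cauchy--Schwartz coordinate-wise gives $\abs{\tilde{\mu}_{P,l-1}(z,i) - \tilde{\alpha}_{P,l-1}(z,i)} \le \norm{\tilde{\theta}_{P,l-1} - \tilde{V}_{P,l-1}^{-1} \tilde{\Phi}_{P,l-1}^T \overline{P}_{\star,l-1}}_{\tilde{V}_{P,l-1}} \norm{\tilde{\phi}_{P,l}(z,i)}_{\tilde{V}_{P,l-1}^{-1}}$; the Nystr\"om construction ensures $\tilde{\sigma}^2_{P,l-1}(z,i) \ge mH \norm{\tilde{\phi}_{P,l}(z,i)}^2_{\tilde{V}_{P,l-1}^{-1}}$, so squaring and summing over $i \in [m]$ yields $\norm{\tilde{\mu}_{P,l-1}(z) - \tilde{\alpha}_{P,l-1}(z)}_2 \le \frac{1}{\sqrt{mH}} \norm{\tilde{\Phi}_{P,l-1}^T \epsilon_{P,l-1}}_{\tilde{V}_{P,l-1}^{-1}} \norm{\tilde{\sigma}_{P,l-1}(z)}_2$, where $\epsilon_{P,l-1}$ stacks the coordinate-wise noise terms $\epsilon_{P,j,k}(i)$ in the matching order.

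Next I would apply the self-normalized tail inequality of Abbasi-Yadkori et al. \cite[Theorem~1]{abbasi2011improved} to the scalar-noise sequence obtained in this way: by the assumed component-wise independence and $\sigma_P$-sub-Gaussianity in \eqref{eqn:noise-state}, the stacked sequence is itself $\sigma_P$-sub-Gaussian with respect to the filtration interleaving coordinates within each step, so with probability at least $1-\delta/6$, uniformly in $l$ and $z$,
\begin{equation*}
\norm{\tilde{\mu}_{P,l-1}(z) - \tilde{\alpha}_{P,l-1}(z)}_2 \le \tfrac{\sigma_P}{\sqrt{mH}} \sqrt{2\bigl(\ln(6/\delta) + \tfrac{1}{2} \ln \tfrac{\det \tilde{V}_{P,l-1}}{\det(mHI_{d_{P,l}})}\bigr)} \, \norm{\tilde{\sigma}_{P,l-1}(z)}_2 .
\end{equation*}

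I would then invoke \cite[Theorem~1]{calandriello2019gaussian} on the lifted dataset $\bar{\cZ}_{l-1}$ of size $m(l-1)H$ with regularization $mH$, using the leverage-score acceptance probabilities $b_{P,l}(z,i)$ determined by $\eta_P$. This gives, with probability at least $1-\delta/6$ and simultaneously in $l$ and $(z,i)$, the sandwich $\lambda_P^{-1} \sigma_{P,l-1}^2(z,i) \le \tilde{\sigma}_{P,l-1}^2(z,i) \le \lambda_P \sigma_{P,l-1}^2(z,i)$, whence summing over $i$ produces the claimed vector-norm sandwich, together with the dictionary-size bound $d_{P,l} \le 6\eta_P \lambda_P (1 + \tfrac{1}{mH}) \gamma_{m(l-1)H}(P)$ via the maximum-information-gain interpretation of $\log\det(I + \tfrac{1}{mH}K_{P,\bar{\cZ}_{l-1}})$. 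Finally, the bias term $\overline{P}_\star - \tilde{\alpha}_{P,l-1}$ is controlled by the vector-valued analogue of \cite[Eq.~25]{chowdhury2019bayesian}: applied coordinate-wise to the $m$ components of $\overline{P}_\star \in \cH_{k_P}(\bar{\cZ})$ and then Pythagoras'd across $i$, one obtains $\norm{\overline{P}_\star(z) - \tilde{\alpha}_{P,l-1}(z)}_2 \le B_P (1 + 1/\sqrt{1-\epsilon_P}) \norm{\tilde{\sigma}_{P,l-1}(z)}_2$. A triangle inequality combines the variance and bias terms into $\beta_{P,l} \norm{\tilde{\sigma}_{P,l-1}(z)}_2$, and a union bound over the three $\delta/6$ events yields the lemma.

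The main obstacle I anticipate is the careful orchestration of the vector-valued lift: verifying that treating $\overline{P}_\star$ as a scalar function on $\bar{\cZ}$ and interleaving the $m$ noise coordinates yields a legitimate $\sigma_P$-sub-Gaussian filtration for the Abbasi-Yadkori inequality (so that the $\sqrt{mH}$ regularization and the $m$-fold data size line up correctly in the $\log\det$ term), and that the Calandriello et al.\ leverage-score guarantees still apply when the sampling probabilities $b_{P,l}(z,i)$ are indexed jointly over state-action-coordinate tuples rather than over raw state-action pairs. Once these bookkeeping issues are handled, the scalar arguments from Lemma \ref{lem:one} transfer essentially verbatim.
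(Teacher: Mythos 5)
Your proposal is correct and follows exactly the route the paper intends: the paper's own proof of Lemma \ref{lem:two} is literally ``similar to that of Lemma \ref{lem:one},'' and your coordinate-wise lift to $\bar{\cZ}$ (Cauchy--Schwartz per coordinate, summing squares to get the $2$-norm, then Abbasi-Yadkori, Calandriello et al., and the bias bound from \cite[Eq.~25]{chowdhury2019bayesian}) is precisely the intended instantiation. The only nit is your closing union bound over ``three $\delta/6$ events,'' which would give $1-\delta/2$ rather than the stated $1-\delta/3$; as in Lemma \ref{lem:one}, only the self-normalized concentration and the Calandriello event each cost $\delta/6$, while the bias bound holds on the latter event.
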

\begin{proof}
The proof is similar to that of Lemma \ref{lem:one}.
\end{proof}

\begin{mytheorem}[Frequentist regret bound for Kernel-UCRL]
\label{thm:regret-bound-Nystrom}
Let the assumptions in Section \ref{sec:assumptions} hold.
For any $0<\delta,\;\epsilon_R,\;\epsilon_P<1$, let $\lambda_R =\frac{1+\epsilon_R}{1-\epsilon_R}$, $\eta_R = \frac{6\lambda_R\ln(12T/\delta)}{\epsilon_R^2}$, $b_{R,l}(z) = \min \lbrace \eta_R \tilde{\sigma}^2_{R,l-1}(z), 1 \rbrace$, $\lambda_P = \frac{1+\epsilon_P}{1-\epsilon_P}$, $\eta_P = \frac{6\lambda_P\ln(12T/\delta)}{\epsilon_P^2}$ and $b_{P,l}(z,i) = \min \lbrace \eta_P \tilde{\sigma}^2_{P,l-1}(z,i), 1 \rbrace$. Then, Kernel-UCRL with Nystrom approximation enjoys, with probability at least $1-\delta$, the regret bound 
\beqan
&\cR(T) \le (LD+2B_RH) \sqrt{2T\ln(3/\delta)} +\\ & 2C_{R,T}\sqrt{2 e \lambda_R H\gamma_{T}(R)T}+2LC_{P,T}\sqrt{2e\lambda_PmH \gamma_{mT}(P)T},
\eeqan
where $L$ is a known upper bound over $L_{M_\star}$, $D = \max_{s,s'\in \cS}\norm{s-s'}_2$ denotes the diameter of $\cS$,  $C_{P,T} = O\left(\frac{B_P}{\sqrt{1-\epsilon_P}}+\frac{\sigma_P}{\sqrt{mH}}\sqrt{\ln(1/\delta)+c_P\gamma_{mT}(P)\ln^2(mT/\delta)}\right)$, $C_{R,T} = O\left(\frac{B_R}{\sqrt{1-\epsilon_R}}+\frac{\sigma_R}{\sqrt{H}}\sqrt{\ln(1/\delta)+c_R\gamma_{T}(R)\ln^2(T/\delta)}\right)$, $c_P =\lambda_P^2/\epsilon_P^2$ and $c_R = \lambda_R^2/\epsilon_R^2$.
\end{mytheorem}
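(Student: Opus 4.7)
The plan is to follow a standard optimism-based regret decomposition, adapted to the kernel-approximation confidence sets established in Lemmas \ref{lem:one} and \ref{lem:two}. First I would condition on the good event $\cE$ on which the conclusions of both lemmas hold uniformly in $l \in [\tau]$ and $z \in \cZ$; a union bound gives $\prob{\cE^c} \le 2\delta/3$. On $\cE$, $\overline{R}_\star \in \cC_{R,l}$ and $\overline{P}_\star \in \cC_{P,l}$, and by assumption $L_{M_\star} \le L$, so $M_\star \in \cM_l$ for every $l$. The optimistic choice of $(M_l,\pi_l)$ then gives $V^{M_l}_{\pi_l,1}(s_{l,1}) \ge V^{M_\star}_{\pi_\star,1}(s_{l,1})$, which reduces bounding the regret to bounding the on-policy value gap
\[
\cR(T) \le \sum_{l=1}^{\tau} \bigl(V^{M_l}_{\pi_l,1}(s_{l,1}) - V^{M_\star}_{\pi_l,1}(s_{l,1})\bigr).
\]

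Next I would apply a simulation-style Bellman decomposition of each episode's value gap over periods $h=1,\ldots,H$, writing it as a telescoping sum of reward gaps $(\overline{R}_{M_l}-\overline{R}_\star)(z_{l,h})$ and one-step future-value gaps $U^{M_l}_h(\overline{P}_{M_l}(z_{l,h}))-U^{M_l}_h(\overline{P}_\star(z_{l,h}))$, plus a martingale-difference process that accounts for the discrepancy between the expected and the realized trajectory of $\pi_l$ in $M_\star$. The Lipschitz assumption \eqref{eqn: lipschitz} combined with $L_{M_l}\le L$ bounds each transition term by $L\norm{\overline{P}_{M_l}(z_{l,h})-\overline{P}_\star(z_{l,h})}_2$. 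Since both $\overline{R}_\star,\overline{R}_{M_l}\in\cC_{R,l}$ and both mean-transition functions lie in $\cC_{P,l}$, the triangle inequality yields per-step gaps of $2\beta_{R,l}\tilde{\sigma}_{R,l-1}(z_{l,h})$ and $2\beta_{P,l}\norm{\tilde{\sigma}_{P,l-1}(z_{l,h})}_2$ respectively. The martingale remainder has per-step range of order $LD+2B_R H$ (one-step future-value Lipschitz constant times diameter of $\cS$, plus value-function range), so Azuma--Hoeffding produces the $(LD+2B_R H)\sqrt{2T\ln(3/\delta)}$ term.

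To aggregate the confidence widths across the $T=\tau H$ steps I would switch from the approximated $\tilde\sigma$ to the exact posterior standard deviations $\sigma$ via the sandwich inequalities of Lemmas \ref{lem:one} and \ref{lem:two}, giving $\tilde\sigma_{R,l-1}^2 \le \lambda_R\sigma_{R,l-1}^2$ and $\norm{\tilde\sigma_{P,l-1}}_2^2 \le \lambda_P\norm{\sigma_{P,l-1}}_2^2$. Applying Cauchy--Schwarz and the standard elliptical-potential (information-gain) bounds
\[
\sum_{l,h}\sigma_{R,l-1}^2(z_{l,h}) \le 2eH\,\gamma_T(R), \qquad \sum_{l,h}\norm{\sigma_{P,l-1}(z_{l,h})}_2^2 \le 2emH\,\gamma_{mT}(P),
\]
then produces the sums $\sum_{l,h}\tilde\sigma_{R,l-1}(z_{l,h}) \le \sqrt{2e\lambda_R H\gamma_T(R)\,T}$ and its transition analogue. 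Inserting the $\beta$-coefficients, and simplifying them using $\ln\det(\tilde V_{R,l-1})/\det(HI_{d_{R,l}}) = O(\gamma_T(R))$ together with the effective-dimension bound $d_{R,l}=O(\gamma_T(R)\ln(T/\delta))$ from Lemma \ref{lem:one} (and the analogous transition bound), collapses $\beta_{R,l}\sqrt{\sum_{l,h}\tilde\sigma_{R,l-1}^2(z_{l,h})}$ into $C_{R,T}\sqrt{2e\lambda_R H\gamma_T(R)\,T}$ and similarly yields the transition contribution with the prefactor $C_{P,T}$.

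The main technical obstacle is the careful propagation of the Nystr\"om approximation error into the final bound: the confidence widths, the effective feature dimensions $d_{R,l},d_{P,l}$, and the log-determinant inside $\beta_{R,l},\beta_{P,l}$ are all perturbations of their exact-kernel counterparts, and these perturbations must be tracked uniformly over $l$ without losing the $\gamma_T$ scaling. The $(1-\epsilon_R)^{-1/2}$ and $(1-\epsilon_P)^{-1/2}$ factors in $C_{R,T},C_{P,T}$ come from the RKHS bias term in \eqref{eqn:reward2} and its transition analogue, while the $\ln^2(T/\delta)$ inside the square roots arises from substituting the Lemma \ref{lem:one}/\ref{lem:two} bounds for $d_{R,l},d_{P,l}$ back into $\beta_{R,l},\beta_{P,l}$. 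A final union bound over the three $\delta/3$-events (Lemmas \ref{lem:one}, \ref{lem:two}, and Azuma--Hoeffding) delivers the claimed $1-\delta$ guarantee.
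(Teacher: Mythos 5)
Your proposal is correct and follows essentially the same route as the paper: optimism via $M_\star\in\cM_l$ on the good event, the Bellman/martingale decomposition (which the paper imports from Lemmas 7 and 9 of \cite{chowdhury2019online}), the triangle inequality against the confidence sets, the sandwich inequalities of Lemmas \ref{lem:one} and \ref{lem:two} to pass from $\tilde\sigma$ to $\sigma$, the information-gain summation bound, and a final union bound over the three $\delta/3$ events. The only cosmetic slip is the intermediate claim $\ln\frac{\det(\tilde V_{R,l-1})}{\det(HI_{d_{R,l}})}=O(\gamma_T(R))$, which should read $O(d_{R,l}\ln(lH))$ before substituting the bound on $d_{R,l}$ --- as your own final paragraph correctly does to produce the $\ln^2$ factor.
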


\begin{proof}
For each episode $l$, define the following events:
\beqan
\cE_{R,l}\bydef\big\lbrace  \abs{\overline{R}_\star(z)-\tilde{\mu}_{R,l-1}(z)}\le \beta_{R,l}\tilde{\sigma}_{R,l-1}(z), \forall z \big\rbrace,\\
\cE_{P,l} \bydef\big\lbrace \norm{\overline{P}_\star(z)-\tilde{\mu}_{P,l-1}(z)}_2\le \beta_{P,l}\norm{\tilde{\sigma}_{P,l-1}(z)}_2, \forall z\big\rbrace.
\eeqan 
By construction of the set of MDPs $\cM_l$, it follows that when the events $\cE_{R,l}$ and $\cE_{P,l}$ are true for all episodes $l \in [\tau]$, the unknown MDP $M_\star$ lies in $\cM_l$. Thus $V^{M_l}_{\pi_l,1}(s_{l,1}) \ge V^{M_\star}_{\pi_\star,1}(s_{l,1})$, since $M_l$ is the most optimistic MDP of $\cM_l$. This implies
\beq
\label{eqn:GP-UCRL}
V^{M_\star}_{\pi_\star,1}(s_{l,1})-V^{M_\star}_{\pi_l,1}(s_{l,1})\le V^{M_l}_{\pi_l,1}(s_{l,1})-V^{M_\star}_{\pi_l,1}(s_{l,1}).
\eeq
Now by the reproducing property of RKHS and Cauchy-Schwartz inequality
$\abs{\overline{R}_\star(z)}=\abs{\inner{\overline{R}_\star}{k_R(z,\cdot)}_{k_R}} \le \norm{\overline{R}_\star}_{k_R}k_R(z,z) \le B_R$ for all $z \in \cZ$. Thus \ref{eqn:GP-UCRL}, \cite[Lemma 7]{chowdhury2019online} and \cite[Lemma 9]{chowdhury2019online} together imply that for any $0 < \delta < 1$, with probability at least $1-\delta/3$,
\beq
\label{eqn:regret-one}
\begin{aligned}
&\cR(T)\le (LD+2B_RH) \sqrt{2T \ln(3/\delta)}+\\
&\sum_{l=1}^{\tau}\sum_{h=1}^{H}\Big( \abs{\overline{R}_{M_l}(z_{l,h}) - \overline{R}_\star(z_{l,h})}+L_{M_l} \norm{\overline{P}_{M_l}(z_{l,h}) - \overline{P}_\star(z_{l,h})}_2\Big).
\end{aligned}
\eeq
Now, by triangle inequality $\abs{\overline{R}_{M_l}(z_{l,h}) - \overline{R}_\star(z_{l,h})} \le \abs{\overline{R}_{M_l}(z_{l,h})-\tilde{\mu}_{R,l-1}(z_{l,h})}+\abs{\overline{R}_\star(z_{l,h})-\tilde{\mu}_{R,l-1}(z_{l,h})}$. Therefore, when the event $\cE_{R,l}$ is true,
\beq
\label{eqn:regret-two}
\abs{\overline{R}_{M_l}(z_{l,h}) - \overline{R}_\star(z_{l,h})} \le 2\beta_{R,l}\;\tilde{\sigma}_{R,l-1}(z_{l,h}),
\eeq
since the mean reward function $\overline{R}_{M_l}$ lies in the confidence set $\cC_{R,l}$. Similarly when the event $\cE_{P,l}$ is true,
\beq
\label{eqn:regret-three}
\norm{\overline{P}_{M_l}(z_{l,h}) - \overline{P}_\star(z_{l,h})}_2
\le 2 \beta_{P,l}\norm{\tilde{\sigma}_{P,l-1}(z_{l,h})}_2,
\eeq
since the mean transition function $\overline{P}_{M_l}$ lies in the confidence set $\cC_{P,l}$. 
Now from \cite[Lemma 10]{abbasi2011improved},
it is easy to see that $\frac{1}{2}\ln\frac{\det(\tilde{V}_{R,l-1})}{\det (HI_{d_{R,l}})}=O(d_{R,l}\ln(lH))$ and $\frac{1}{2}\ln\frac{\det(\tilde{V}_{P,l-1})}{\det (mHI_{d_{P,l}})}=O(d_{P,l}\ln(mlH))$. Since, by definition, $\gamma_t(R)$ and $\gamma_t(P)$ are non-decreasing functions in $t$, Lemmas \ref{lem:one} and \ref{lem:two} together imply that, with probability at least $1-2\delta/3$, $\beta_{P,l}=O\left(\frac{B_P}{\sqrt{1-\epsilon_P}}+\frac{\sigma_P}{\sqrt{mH}}\sqrt{\ln(1/\delta)+c_P\gamma_{mlH}(P)\ln(mlH)\ln(T/\delta)}\right)$, $\beta_{R,l}=O\left(\frac{B_R}{\sqrt{1-\epsilon_R}}+\frac{\sigma_R}{\sqrt{H}}\sqrt{\ln(1/\delta)+c_R\gamma_{lH}(R)\ln(lH)\ln(T/\delta)}\right)$. Further, it also holds that $\tilde{\sigma}_{R,l-1}(z_{l,h}) \le \sqrt{\lambda_R}\sigma_{R,l-1}(z_{l,h})$, $\norm{\tilde{\sigma}_{P,l-1}(z_{l,h})}_2 \le \sqrt{\lambda_P}\norm{\sigma_{P,l-1}(z_{l,h})}_2$, and the events $\cE_{R,l}$ and $\cE_{P,l}$ are true for all episodes $l \in [\tau]$. Now combining \ref{eqn:regret-one}, \ref{eqn:regret-two}, \ref{eqn:regret-three} and applying a union bound we have, with probability at least $1-\delta$, that
\beqn
\label{eqn:regret-four}
\begin{aligned}
&\cR(T) \le 2\sqrt{\lambda_R}\beta_{R,\tau}\sum_{l=1}^{\tau}\sum_{h=1}^{H} \sigma_{R,l-1}(z_{l,h})+\\
& 2L\sqrt{\lambda_P}\beta_{P,\tau}\sum_{l=1}^{\tau}\sum_{h=1}^{H}\norm{\sigma_{P,l-1}(z_{l,h})}_2+(LD+2B_RH) \sqrt{2T\ln(3/\delta)},
\end{aligned}
\eeqn
where we have used the fact that both $\beta_{R,l}$ and $\beta_{P,l}$ are non-decreasing with the number of episodes $l$ and that $L_{M_l} \le L$ by construction of $\cM_l$ (and since $M_l \in \cM_l$). Now the result follows from \cite[Lemma 11]{chowdhury2019online} by noting that
$\sum_{l=1}^{\tau}\sum_{h=1}^{H} \sigma_{R,l-1}(z_{l,h}) \le \sqrt{2 e H T\gamma_{T}(R)}$ and
$\sum_{l=1}^{\tau}\sum_{h=1}^{H} \norm{\sigma_{P,l-1}(z_{l,h})}_2 \le \sqrt{2em HT\gamma_{mT}(P)}$.
\end{proof} 

%where $\gamma_{t}(R)$ denotes the maximum information gain about an $f \sim GP_{\cZ}(0,k_R)$ after $t$ noisy observations with iid Gaussian noise $\cN(0,H)$ and $\gamma_{t}(P)$ denotes the maximum information gain about an $f \sim GP_{\tilde{\cZ}}(0,k_P)$ after $t$ noisy observations with iid Gaussian noise $\cN(0,mH)$.
%Therefore
%with probability at least $1-\delta$, the cumulative regret
%$\cR(T)
%\le  2\beta_{R,\tau}\sqrt{2 e H\gamma_{T}(R)T}+  2L\beta_{P,\tau}\sqrt{2emH \gamma_{mT}(P)T}+ (LD+2B_RH) \sqrt{2T\ln(3/\delta)}$,
%where $\beta_{R,\tau}=B_R + \dfrac{\sigma_R}{\sqrt{H}}\sqrt{2\big(\ln(3/\delta)+\gamma_{(\tau-1)H}(k_R,\cZ)\big)}$ and $\beta_{P,\tau} =  B_P + \dfrac{\sigma_P}{\sqrt{mH}}\sqrt{2\big(\ln(3/\delta)+\gamma_{m(\tau-1)H}(k_P,\tilde{\cZ})\big)}$.

\subsection{Interpretation of the Bound} Theorem \ref{thm:regret-bound-Nystrom} implies that the cumulative regret of Kernel-UCRL after $\tau$ episodes is
$\tilde{O}\Big(\big(\sqrt{H\gamma_T(R)} +\gamma_T(R)\big)\sqrt{T} + L\big(\sqrt{mH\gamma_{mT}(P)} + \gamma_{mT}(P)\big)\sqrt{T}+ H\sqrt{T}\Big)$
with high probability. ($\tilde{O}$ hides logarithmic factors.)  Now we illustrate the growth of $\gamma_T(R)$ and $\gamma_{mT}(P)$ as functions of $T$ with the following concrete examples.

Let $k_R(z,z')\bydef k_1(s,s')+k_2(a,a')$, i.e., $k_R$ is an additive kernel of $k_1:\cS \times \cS \ra \Real$ and $k_2: \cA \times \cA \ra \Real$. Then, from \cite[Theorem 3]{krause2011contextual}, $\gamma_T(R) \le \gamma_T(k_1)+\gamma_T(k_2)+2 \ln T$. Now if both $\cS \subset \Real^m$, $\cA \subset \Real^n$ are compact and convex sets, and both $k_1,k_2$ are Squared Exponential (SE) kernels, then from \cite[Theorem 4]{srinivas2009gaussian}, $\gamma_T(k_1)=O\big((\ln T)^m\big)$ and $\gamma_T(k_2)=O\big((\ln T)^n\big)$. Hence, in this case
$\gamma_T(R)=\tilde{O}\big((\ln T)^{\max\lbrace m,n \rbrace}\big)$.

Further, let $k_P((z,i),(z',j))\bydef k_3(z,z') k_4(i,j)$, i.e., $k_P$ is a product kernel of $k_3:\cZ \times \cZ \ra \Real$ and $k_4: [m] \times [m] \ra \Real$. Then, from \cite[Theorem 2]{krause2011contextual},  $\gamma_{mT}(P) \le m\gamma_{mT}(k_3)+m\ln (mT)$, since all kernel matrices over any subset of $\lbrace 1,\ldots,m \rbrace$ have rank at most $m$. Now if $k_3$ is a Squared Exponential kernel on $\cZ$, then $\gamma_{mT}(k_3)=\tilde{O}\big((\ln (mT))^{m+n}\big)$. Hence, in this case $\gamma_{mT}(P)=O\big(m\big(\ln (mT)\big)^{m+n}\big)$.

In essence, $\gamma_T(R)$ and $\gamma_{mT}(P)$ grow sublinearly with $T$ for some popular kernels, e.g. Squared Exponential, polynomial and Mat\'{e}rn. Now, since the cumulative regret of Kernel-UCRL scales linearly with $\gamma_T(R)$ and $\gamma_{mT}(P)$, it, in turn, grows sublinearly with $T$ for these kernels.

%\newpage
\section{ANALYSIS OF PSRL UNDER NYSTR\"{O}M APPROXIMATION}

Optimizing for an optimistic policy is not computationally tractable in general, even though planning for the optimal policy is possible for a given MDP. A popular approach to overcome this difficulty is to sample a random MDP at every episode and solve for its optimal policy, called posterior sampling \cite{osband2016posterior}.

PSRL (Algorithm \ref{algo:Kernel-PSRL}), in its most general form, starts with a prior distribution $\Phi$ over MDPs. At the beginning of episode $l$, using the history of observations $\cH_{l-1}$, it updates the posterior $\Phi_l$ and samples an MDP $M_l$ from $\Phi_l$.
(Sampling can be done using MCMC methods even if $\Phi_{l }$ doesn't admit any closed form.) It then selects an optimal policy $\pi_l$ of the sampled MDP $M_l$, in the sense that $V^{M_l}_{\pi_l,h}(s)=\max_{\pi}V^{M_l}_{\pi,h}(s)$ for all $s \in \cS$ and for all $h=1,2,\ldots,H$,
and executes $\pi_l$ for the entire episode.

%For example, if $\Phi_R$ and $\Phi_P$ are specified by GPs $GP_\cZ(0,k_R)$ and $GP_{\tilde{\cZ}}(0,k_P)$ respectively with Gaussian observation model, then the posteriors $\Phi_{R,l}$ and $\Phi_{P,l}$ are given by GP posteriors as discussed in Section \ref{subsec:uncertainty}.
%Here at every episode $l$, PSRL samples an MDP $M_l$ with mean reward function $\overline{R}_{M_l} \sim GP_\cZ(\mu_{R,l-1},k_{R,l-1})$ and mean transition function $\overline{P}_{M_l} \sim GP_{\tilde{\cZ}}(\mu_{P,l-1},k_{P,l-1})$. 

\begin{algorithm}
\renewcommand\thealgorithm{2}
\caption{PSRL}\label{algo:Kernel-PSRL}
\begin{algorithmic}
\STATE \textbf{Input:} Prior $\Phi$. 
\STATE Set $\Phi_1=\Phi$.
\FOR{ episode $l = 1, 2, 3, \ldots$}
%\STATE Observe initial state $s_{l,1}$.
\STATE Sample $M_l \sim \Phi_l$.
\STATE Choose policy 
$\pi_l$ such that $V^{M_l}_{\pi_l,h}(s)=\max_{\pi}V^{M_l}_{\pi,h}(s)\; \forall s \in \cS,\forall h=1,2,\ldots,H$.
\STATE \FOR{period $h=1,2,3, \ldots, H$} 
\STATE Choose action $a_{l,h} = \pi_l(s_{l,h},h)$.
\STATE Observe reward $r_{l,h} = \overline{R}_\star(z_{l,h}) + \epsilon_{R,l,h}$.
\STATE Observe next state $s_{l,h+1}=\overline{P}_\star(z_{l,h}) + \epsilon_{P,l,h}$.
\ENDFOR
\STATE Update $\Phi_l$ to $\Phi_{l+1}$, using $\left \lbrace s_{l,h}, a_{l,h}, s_{l,h+1} \right\rbrace_{1 \le h \le H}$.
\ENDFOR
\end{algorithmic}
\addtocounter{algorithm}{-2}
\end{algorithm}

\cite{osband2016posterior} show that if we have a frequentist regret bound for UCRL in hand, then we can obtain a similar bound (upto a constant factor) on the \emph{Bayes regret}, defined as the \textit{expected regret under the prior distribution $\Phi$}, of PSRL. We use this idea to obtain a sublinear bound on the Bayes regret of PSRL under kernel approximation.
\begin{mytheorem}[Bayes regret of PSRL under Nystr\"{o}m approximation]
\label{thm:regret-bound-PSRL}
Let the assumptions in Section \ref{sec:assumptions} hold and $\Phi$ be a (known) prior distribution over MDPs $M_\star$. Then, the Bayes regret of PSRL satisfies
\beqan
&\expect{\cR(T)} \le 3B_R+2 \hat{C}_{R,T} \sqrt{2 e  H\gamma_{T}(R)T} \\&+ 3\expect{L_{M_\star}}\hat{C}_{P,T}\sqrt{2em H\gamma_{mT}(P)T},
\eeqan
where $L_{M_\star}$ is the global Lipschitz constant for the future value function of $M_\star$, $\hat{C}_{R,T} \bydef O\left(\frac{B_R}{\sqrt{1-\epsilon_R}}+\frac{\sigma_R}{\sqrt{H}}\sqrt{\ln(T)+c_R\gamma_{T}(R)\ln^2(T)}\right)$ and $\hat{C}_{P,T}\bydef O\left(\frac{B_P}{\sqrt{1-\epsilon_P}}+\frac{\sigma_P}{\sqrt{mH}}\sqrt{\ln(T)+c_P\gamma_{mT}(P)\ln^2(mT)}\right)$.
\end{mytheorem}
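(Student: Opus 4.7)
The plan is to combine the ``posterior sampling lemma'' of \cite{osband2016posterior} with the Nystr\"om confidence sets already constructed for Kernel-UCRL and the value-difference decomposition used in the proof of Theorem \ref{thm:regret-bound-Nystrom}. The crucial identity I will exploit is that, conditionally on the history $\cH_{l-1}$, the sampled MDP $M_l$ and the true MDP $M_\star$ are identically distributed, so for every $\cH_{l-1}$-measurable functional $f$, $\expect{f(M_\star)\given \cH_{l-1}} = \expect{f(M_l)\given \cH_{l-1}}$. Taking $f(M)=V^{M}_{\pi_M,1}(s_{l,1})$ (with $\pi_l=\pi_{M_l}$ and $\pi_\star=\pi_{M_\star}$) turns the optimism inequality \eqref{eqn:GP-UCRL} into an equality in expectation, so that the Bayes regret reduces to $\sum_l \expect{V^{M_l}_{\pi_l,1}(s_{l,1})-V^{M_\star}_{\pi_l,1}(s_{l,1})}$, and the rest of the argument can re-use the machinery of Theorem \ref{thm:regret-bound-Nystrom}.

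Concretely, I will proceed as follows. First, I will instantiate the confidence sets $\cC_{R,l},\cC_{P,l}$ and the plausible-MDP family $\cM_l$ of Kernel-UCRL with the tuning $\delta=1/T$, so that by Lemmas \ref{lem:one} and \ref{lem:two} the event $\mathcal{G}\bydef\lbrace M_\star\in\cM_l\;\forall l\in[\tau]\rbrace$ has probability at least $1-O(1/T)$. Since $\cM_l$ is $\sigma(\cH_{l-1})$-measurable, posterior sampling gives $\prob{M_l\in\cM_l\given \cH_{l-1}}=\prob{M_\star\in\cM_l\given \cH_{l-1}}$, so both $M_l$ and $M_\star$ lie in $\cM_l$ for every $l$ on an event of the same probability. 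Second, on this joint good event I can apply the triangle-inequality arguments \eqref{eqn:regret-two}--\eqref{eqn:regret-three} verbatim, yielding per-step deviations $\abs{\overline{R}_{M_l}(z_{l,h})-\overline{R}_\star(z_{l,h})}\le 2\beta_{R,l}\tilde\sigma_{R,l-1}(z_{l,h})$ and $\norm{\overline{P}_{M_l}(z_{l,h})-\overline{P}_\star(z_{l,h})}_2\le 2\beta_{P,l}\norm{\tilde\sigma_{P,l-1}(z_{l,h})}_2$. Third, I will invoke \cite[Lemma 7]{chowdhury2019online} to unroll $V^{M_l}_{\pi_l,1}-V^{M_\star}_{\pi_l,1}$ along the realized trajectory; the martingale-difference contribution now vanishes under $\expect{\cdot}$, which is precisely why the $\sqrt{2T\ln(3/\delta)}$ concentration factor present in Theorem \ref{thm:regret-bound-Nystrom} is absent here. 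Finally, I will collect $\sum_{l,h}\sigma_{R,l-1}(z_{l,h})\le\sqrt{2eHT\gamma_T(R)}$ and $\sum_{l,h}\norm{\sigma_{P,l-1}(z_{l,h})}_2\le\sqrt{2emHT\gamma_{mT}(P)}$ from \cite[Lemma 11]{chowdhury2019online} and use Lemmas \ref{lem:one}--\ref{lem:two} with $\delta=1/T$ to bound $\sqrt{\lambda_R}\beta_{R,\tau}$ and $\sqrt{\lambda_P}\beta_{P,\tau}$ by $\hat C_{R,T}$ and $\hat C_{P,T}$ respectively.

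The main obstacle, beyond absorbing $\mathcal{G}^c$ into the $3B_R$ constant via the crude bound $\cR(T)\le 2B_R T$ on $\mathcal{G}^c$ (using $\abs{\overline{R}_\star(\cdot)}\le B_R$ from the reproducing property, so that $\expect{\cR(T)\mathbf{1}_{\mathcal{G}^c}}=O(B_R)$), is the handling of the Lipschitz constant, which is now random. Theorem \ref{thm:regret-bound-PSRL} does not assume any uniform bound $L$; the transition contribution must therefore be written as $\expect{L_{M_l}\sum_{l,h}\norm{\tilde\sigma_{P,l-1}(z_{l,h})}_2}$, and to pull out $\expect{L_{M_l}}=\expect{L_{M_\star}}$ one has to either invoke the posterior-sampling identity a second time (using that $\tilde\sigma_{P,l-1}$ is $\cH_{l-1}$-measurable and hence independent of $L_{M_l}$ given $\cH_{l-1}$) or apply Cauchy--Schwarz. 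I expect this is also where the coefficient $3$ in front of $\expect{L_{M_\star}}$ arises: the factor $2$ familiar from \eqref{eqn:regret-three} is inflated by an additional summand accounting for the value-function gap between the sampled MDP and the true MDP induced by the randomness of $L_{M_l}$, picked up when the Bayesian martingale decomposition is closed.
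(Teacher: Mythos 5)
Your overall architecture is the right one, and it is exactly what the paper intends: the paper's own ``proof'' is a one-line deferral to the posterior-sampling reduction of \cite{osband2016posterior} as instantiated in \cite{chowdhury2019online}, i.e.\ the conditional identity $\expect{f(M_\star)\given\cH_{l-1}}=\expect{f(M_l)\given\cH_{l-1}}$ applied to $f(M)=V^{M}_{\pi_M,1}(s_{l,1})$, followed by the same confidence-width and information-gain machinery as Theorem \ref{thm:regret-bound-Nystrom}. Two of your steps, however, do not close as written.

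First, the claim that ``both $M_l$ and $M_\star$ lie in $\cM_l$ for every $l$ on an event of the same probability'' is not correct. Posterior sampling gives $\prob{M_l\notin\cM_l}=\prob{M_\star\notin\cM_l}$ \emph{episode by episode}, but Lemmas \ref{lem:one}--\ref{lem:two} control $\lbrace M_\star\in\cM_l\ \forall l\rbrace$ through a single uniform concentration event for the one fixed object $M_\star$; the $M_l$ are fresh draws each episode, so the only available bound on $\prob{\exists l: M_l\notin\cM_l}$ is the union bound $\sum_{l}\prob{M_\star\notin\cM_l}$, which under your tuning $\delta=1/T$ is $O(\tau/T)=O(1/H)$, not $O(1/T)$. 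Combined with your global crude bound $\cR(T)\le 2B_RT$ on $\mathcal{G}^c$, this yields an additive $O(B_R\tau)$ term, which destroys the $3B_R$ constant. The fix is standard but must be made explicit: work episode by episode with $E_l\bydef\lbrace M_\star\in\cM_l,\,M_l\in\cM_l\rbrace$, bound the contribution of $E_l^c$ to the per-episode gap $V^{M_l}_{\pi_l,1}(s_{l,1})-V^{M_\star}_{\pi_l,1}(s_{l,1})$ by $2B_RH\,\prob{E_l^c}$ with $\prob{E_l^c}=O(1/T)$, and sum over $l\le\tau$ to get $O(B_R)$.

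Second, your decoupling of the random Lipschitz constant is flawed. You propose to pull out $\expect{L_{M_l}\given\cH_{l-1}}=\expect{L_{M_\star}\given\cH_{l-1}}$ ``using that $\tilde\sigma_{P,l-1}$ is $\cH_{l-1}$-measurable and hence independent of $L_{M_l}$ given $\cH_{l-1}$.'' The \emph{function} $\tilde\sigma_{P,l-1}(\cdot)$ is $\cH_{l-1}$-measurable, but its arguments $z_{l,h}$ are not: the episode-$l$ trajectory is generated by $\pi_l=\pi_{M_l}$ and therefore depends on $M_l$, so $L_{M_l}$ and $\sum_{h}\norm{\tilde\sigma_{P,l-1}(z_{l,h})}_2$ are not conditionally independent given $\cH_{l-1}$, and $\expect{L_{M_\star}}$ cannot be extracted this way. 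Your fallback, Cauchy--Schwarz, produces $\sqrt{\expect{L_{M_\star}^2}}$ rather than $\expect{L_{M_\star}}$ and so does not match the stated bound either. This decoupling is the one genuinely nontrivial step beyond Theorem \ref{thm:regret-bound-Nystrom} and needs an explicit argument (e.g.\ conditioning on $M_l$ and bounding the width sum by a trajectory-independent quantity before invoking the posterior-sampling identity); it is also the most plausible source of the coefficient $3$ on the transition term, rather than the ``extra value-function gap'' you conjecture.
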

\begin{proof}
The proof is similar to that of \cite[Theorem 2]{chowdhury2019online}.
\end{proof}

%\newpage
\section{CONCLUSIONS}
Any MDP $M$ whose mean reward function satisfies $\overline{R}_M(z) = \theta_R^T\tilde{\phi}_{R,l}(z)$ for some $\theta_R \in \Real^{d_{R,l}}$ such that $\norm{\theta_R-\tilde{\theta}_{R,l-1}}_{\tilde{V}_{R,l-1}} \le \sqrt{H}\beta_{R,l}$, and mean transition function satisfies $\overline{P}_M(z,i) = \theta_P^T\tilde{\phi}_{P,l}(z,i)$, $i=1,\ldots,m$, for some $\theta_P \in \Real^{d_{P,l}}$ such that $\norm{\theta_P-\tilde{\theta}_{P,l-1}}_{\tilde{V}_{P,l-1}} \le \sqrt{mH}\beta_{P,l}$, lies in the set $\cM_l$. However, there might be other MDPs in $\cM_l$ which do not posses this linear structure. Therefore, optimal planning may be computationally intractable even for a single MDP. So it is common in the literature to assume access to an approximate MDP planner $\Gamma (M,\eps)$ which returns an $\epsilon$-optimal policy for $M$. Given such a planner $\Gamma$, if it is possible to obtain (through extended value iteration \cite{jaksch2010near} or otherwise) an efficient planner $\tilde{\Gamma} (\cM,\eps)$ which returns an $\epsilon$-optimal policy for the most optimistic MDP from a  family $\cM$, then we modify PSRL and Kernel-UCRL to choose $\pi_l=\Gamma (M_l,\sqrt{H/l})$ and $\pi_l=\tilde{\Gamma} (\cM_l,\sqrt{H/l})$ respectively at every episode $l$. It follows that this adds only an $O(\sqrt{T})$ factor in the respective regret bounds. The design of such approximate planners for continuous state and action spaces remains a subject of active research, whereas our focus in this work is on the statistical efficiency of the online learning problem.
%\newpage
%\input{Algorithm}
%\input{Results}
%\input{appendix_RKHS}

%%%%%%%%%%%%%%%%%%%%%%%%%%%%%%%%%%%%%%%%%%%%%%%%%%%%%%%%%%%%%%%%%%%%%%%%%%%%%%%%

   % This command serves to balance the column lengths
                                  % on the last page of the document manually. It shortens
                                  % the textheight of the last page by a suitable amount.
                                  % This command does not take effect until the next page
                                  % so it should come on the page before the last. Make
                                  % sure that you do not shorten the textheight too much.

%%%%%%%%%%%%%%%%%%%%%%%%%%%%%%%%%%%%%%%%%%%%%%%%%%%%%%%%%%%%%%%%%%%%%%%%%%%%%%%%

%%%%%%%%%%%%%%%%%%%%%%%%%%%%%%%%%%%%%%%%%%%%%%%%%%%%%%%%%%%%%%%%%%%%%%%%%%%%%%%%

%%%%%%%%%%%%%%%%%%%%%%%%%%%%%%%%%%%%%%%%%%%%%%%%%%%%%%%%%%%%%%%%%%%%%%%%%%%%%%%%
\section*{APPENDIX}
In this section we will assume that $\cS \subset \Real$, i.e.,  $m=1$ and $\cA \subset \Real^n$ for some $n=O(1)$. Also we will assume that $k_R$ and $k_P$ are Squared Exponential (SE) kernels defined over $[0,1]^{n+1}$ with length scale parameters $l_R$ and $l_P$, respectively.
\subsection{Kernel-UCRL under QFF Approximation}
First, we choose an $m_R \in \mathbb{N}$ such that $1/l_R^2 \le m_R \le C_1/l_R^2$ and $\log_{4/e}(T^6) \le m_R \le C_2 \log_{4/e}(T^6)$ for some appropriate constants $C_1$ and $C_2$. Then we set $d_R=2(m_R)^{n+1}$ and construct $d_R$ dimensional feature map $\tilde{\phi}_R(z)$ using \ref{eqn:qff-embedding}. Similarly, we choose an $m_P \in \mathbb{N}$ such that $1/l_P^2 \le m_P \le C_3/l_P^2$ and $\log_{4/e}(T^6) \le m_P \le C_4 \log_{4/e}(T^6)$ for some appropriate constants $C_3$ and $C_4$ and construct the feature map $\tilde{\phi}_P(z)$ of dimension $d_P=2(m_P)^{n+1}$. Now we construct confidence sets $\cC_{R,l}$ and $\cC_{P,l}$ as follows. First, we compute $\tilde{\theta}_{R,l-1}=\tilde{V}_{R,l-1}^{-1}\tilde{\Phi}_{R,l-1}^T R_{l-1}$, where $R_{l-1} = [r_{1,1},\ldots,r_{l-1,H}]^T$, $\tilde{\Phi}_{R,l-1}=[\tilde{\phi}_{R}(z_{1,1}),\ldots,\tilde{\phi}_{R}(z_{l-1,H})]^T$ and $\tilde{V}_{R,l-1} = \tilde{\Phi}_{R,l-1}^T\tilde{\Phi}_{R,l-1}+H I_{d_{R}}$. Then we fix any $0<\delta<1$ and define $\cC_{R,l}$ to be the set of all functions $f:\cZ \ra \Real$ such that
\beqn
\label{eqn:confidence-set-reward-ff} 
\abs{f(z)-\tilde{\mu}_{R,l-1}(z)} \le \beta_{R,l}\tilde{\sigma}_{R,l-1}(z)+O(B_R/T), \forall z \in \cZ,
\eeqn
where $\tilde{\mu}_{R,l-1}(z) = \tilde{\phi}_{R}(z)^T\tilde{\theta}_{R,l-1}$, $\tilde{\sigma}_{R,l-1}^2(z)= H\tilde{\phi}_{R}(z)^T \tilde{V}_{R,l-1}^{-1}\tilde{\phi}_{R}(z)$
and $\beta_{R,l}=B_R+ \frac{\sigma_R}{\sqrt{H}}\sqrt{2\left(\ln(3/\delta)+\frac{1}{2}\ln\frac{\det(\tilde{V}_{R,l-1})}{\det (HI_{d_{R}})}\right)}$.

Similarly, we compute $\tilde{\theta}_{P,l-1}=\tilde{V}_{P,l-1}^{-1}\tilde{\Phi}_{P,l-1}^T S_{l-1}$, where $S_{l-1} = [s_{1,2},\ldots,s_{l-1,H+1}]^T$, $\tilde{\Phi}_{P,l-1}=[\tilde{\phi}_{P}(z_{1,1}),\ldots,\tilde{\phi}_{P}(z_{l-1,H})]^T$ and $\tilde{V}_{P,l-1} = \tilde{\Phi}_{P,l-1}^T\tilde{\Phi}_{P,l-1}+H I_{d_{P}}$. Now, we define $\cC_{P,l}$ to be the set of all functions $f:\cZ \ra \Real^m$ such that 
\beqn
\label{eqn:confidence-set-state-ff} 
\abs{f(z)-\tilde{\mu}_{P,l-1}(z)} \le \beta_{P,l}\tilde{\sigma}_{P,l-1}(z)+O(B_P/T), \forall z \in \cZ,
\eeqn
where $\tilde{\mu}_{P,l-1}(z)= \tilde{\phi}_{P}(z)^T\tilde{\theta}_{P,l-1}$ and $\tilde{\sigma}_{P,l-1}^2(z)= H\tilde{\phi}_{P}(z)^T \tilde{V}_{P,l-1}^{-1}\tilde{\phi}_{P}(z)$ and
$\beta_{P,l}=B_P+ \frac{\sigma_P}{\sqrt{H}}\sqrt{2\left(\ln(3/\delta)+\frac{1}{2}\ln\frac{\det(\tilde{V}_{P,l-1})}{\det (HI_{d_{P}})}\right)}$.

Next, following the same approach as before, we build the set $\cM_l$ of all plausible MDPs, choose an optimistic policy $\pi_l$ for $\cM_l$ 
and execute $\pi_l$ for the entire episode.

\subsection{Regret Bound under QFF Approximation}

\begin{mylemma}
\label{lem:three}
Let $m_R \ge \max\lbrace 1/l_R^2,\log_{4/e}(T^6)\rbrace$ and $\delta \in (0,1)$. Then, with probability at least $1-\delta/3$, uniformly over all $z \in \cZ$ and $1 \le l \le \tau$, 
\beqn
\abs{\overline{R}_\star(z)-\tilde{\mu}_{R,l-1}(z)} \le \beta_{R,l}\tilde{\sigma}_{R,l-1}(z) + O(B_R/T).
\eeqn
\end{mylemma}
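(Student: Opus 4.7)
The plan is to mirror the argument used for Lemma \ref{lem:one}, but with the data-dependent Nystr\"om error term $(1+1/\sqrt{1-\epsilon_R})B_R\tilde{\sigma}_{R,l-1}(z)$ replaced by a deterministic $O(B_R/T)$ residual arising from the Gauss--Hermite quadrature error. The main decomposition is
\[
\overline{R}_\star(z) - \tilde{\mu}_{R,l-1}(z) = \bigl(\overline{R}_\star(z) - \tilde{\phi}_R(z)^T \theta_R^\sharp\bigr) + \tilde{\phi}_R(z)^T\bigl(\theta_R^\sharp - \tilde{\theta}_{R,l-1}\bigr),
\]
where $\theta_R^\sharp \in \Real^{d_R}$ is a deterministic ``target'' coefficient vector whose existence and boundedness I would first establish from the QFF kernel approximation.

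First I would invoke the QFF guarantee of Mutn\'y and Krause: for the SE kernel on $[0,1]^{n+1}$ with length scale $l_R$, choosing $\bar{d}=m_R$ Gauss--Hermite nodes per coordinate with $m_R \ge 1/l_R^2$ ensures the quadrature saddle point is beyond the effective bandwidth of the kernel, and the resulting $\tilde{k}_R(z,z') = \tilde{\phi}_R(z)^T \tilde{\phi}_R(z')$ satisfies an exponentially decaying uniform error of the form $\sup_{z,z'} \abs{k_R(z,z')-\tilde{k}_R(z,z')} = O\bigl((e/4)^{m_R}\bigr)$. Under the additional assumption $m_R \ge \log_{4/e}(T^6)$, this error is $O(1/T^6)$. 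Substituting this into the RKHS/Bochner integral representation of $\overline{R}_\star$ (again following Mutn\'y--Krause) produces a $\theta_R^\sharp$ with $\norm{\theta_R^\sharp}_2 \le B_R$ and $\sup_{z \in \cZ}\abs{\overline{R}_\star(z)-\tilde{\phi}_R(z)^T \theta_R^\sharp} = O(B_R/T^3)$, which at once bounds the first summand by $O(B_R/T)$.

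For the estimation error, I would regard the observations as coming from the approximately linear model $r_{j,k} = \tilde{\phi}_R(z_{j,k})^T \theta_R^\sharp + \xi_{j,k} + \epsilon_{R,j,k}$ with $\abs{\xi_{j,k}} = O(B_R/T^3)$ deterministic. Standard ridge-regression algebra gives
\[
\tilde{\theta}_{R,l-1} - \theta_R^\sharp = -H\tilde{V}_{R,l-1}^{-1}\theta_R^\sharp + \tilde{V}_{R,l-1}^{-1}\tilde{\Phi}_{R,l-1}^T \epsilon_{R,l-1} + \tilde{V}_{R,l-1}^{-1}\tilde{\Phi}_{R,l-1}^T \xi_{l-1}.
\]
Applying Cauchy--Schwarz against $\tilde{\phi}_R(z)$ in the $\tilde{V}_{R,l-1}^{-1}/\tilde{V}_{R,l-1}$-duality, and using the QFF-specific identity $\norm{\tilde{\phi}_R(z)}_{\tilde{V}_{R,l-1}^{-1}} = \tilde{\sigma}_{R,l-1}(z)/\sqrt{H}$, the regularization term contributes $B_R\,\tilde{\sigma}_{R,l-1}(z)$; the noise term is controlled by \cite[Theorem 1]{abbasi2011improved} and contributes exactly the $\sigma_R$-dependent piece of $\beta_{R,l}$ (this is the step at which the $1-\delta/3$ failure probability is incurred, uniformly in $l$ via the self-normalized construction); and the bias-propagation term is controlled crudely by $\norm{\tilde{\Phi}_{R,l-1}^T \xi_{l-1}}_{\tilde{V}_{R,l-1}^{-1}} \le \norm{\xi_{l-1}}_2 = O(B_R\sqrt{T}/T^3)$, which after multiplication by $\tilde{\sigma}_{R,l-1}(z)/\sqrt{H} \le 1/\sqrt{H}$ is absorbed into the $O(B_R/T)$ residual.

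The main obstacle I anticipate is the existence-and-boundedness step for $\theta_R^\sharp$: converting a pointwise kernel-approximation error into a uniform function-approximation error by a coefficient vector of controlled $\ell_2$-norm. This requires writing $\overline{R}_\star$ through its integral representation against the spectral measure of $k_R$, substituting the Gauss--Hermite quadrature, and checking that the $m_R \ge 1/l_R^2$ threshold precisely makes the tails of the integrand against the SE spectral density negligible. Once this approximation lemma is in place, the remainder of the argument reuses the regularized least-squares / self-normalized concentration machinery from the proof of Lemma \ref{lem:one} essentially verbatim.
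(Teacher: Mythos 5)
Your route is genuinely different from the paper's, and it has a gap at exactly the step you flag as the ``main obstacle.'' You reduce everything to the existence of a vector $\theta_R^\sharp \in \Real^{d_R}$ with $\norm{\theta_R^\sharp}_2 \le B_R$ and $\sup_{z}\abs{\overline{R}_\star(z)-\tilde{\phi}_R(z)^T\theta_R^\sharp} = O(B_R/T^3)$, i.e., a \emph{uniform function-approximation} guarantee with controlled coefficient norm. This does not follow from the QFF guarantee you invoke: \cite[Theorem 1]{mutny2018efficient} controls only $\epsilon_{d_R} = \sup_{z,z'}\abs{k_R(z,z')-\tilde{\phi}_R(z)^T\tilde{\phi}_R(z')}$, and a small uniform \emph{kernel} error does not by itself yield a small uniform error for every function in the RKHS ball via a bounded-norm $\theta$. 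Your proposed fix --- substituting the Gauss--Hermite rule into the spectral representation $\overline{R}_\star(x)=\int p(\omega)c(\omega)e^{i\omega^Tx}d\omega$ --- runs into the problem that membership of $\overline{R}_\star$ in the RKHS ball only guarantees $c \in L^2(p)$; the fast (exponential-in-$m_R$) quadrature error bounds you need require high-order smoothness of the integrand, which the kernel integrand $\cos(\omega^T(x-y))$ has but an arbitrary spectral density $c(\omega)$ does not. So the approximation lemma underpinning your decomposition is not available at the rate you assume, and the rest of the (otherwise standard and correctly executed) misspecified-ridge-regression argument has nothing to stand on.

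The paper avoids this entirely by never introducing a finite-dimensional surrogate for $\overline{R}_\star$ itself. It inserts the \emph{exact} kernel ridge estimator of the noiseless rewards, $\alpha_{R,l-1}(z)=k_{R,l-1}(z)^T(K_{R,l-1}+HI)^{-1}\overline{R}_{\star,l-1}$, as an intermediate quantity: the term $\abs{\overline{R}_\star(z)-\alpha_{R,l-1}(z)} \le B_R\sigma_{R,l-1}(z)$ is a known RKHS interpolation bound; the term $\abs{\alpha_{R,l-1}(z)-\tilde{\alpha}_{R,l-1}(z)}$ compares exact and QFF-feature ridge regression applied to the \emph{same} targets and is bounded by $O(B_RH\epsilon_{d_R}l^2)$ purely via perturbation of the kernel matrix (\cite[Lemma 15]{chowdhury2019bayesian}), which is precisely the level at which the QFF guarantee operates; and the noise term $\abs{\tilde{\alpha}_{R,l-1}(z)-\tilde{\mu}_{R,l-1}(z)}$ is handled by the same self-normalized bound you use. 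If you want to keep your decomposition, you must first prove the function-approximation lemma for QFF (with an explicit norm bound on $\theta_R^\sharp$), which is a substantive additional result; otherwise, switch to the kernel-matrix perturbation route.
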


%$\tilde{\alpha}_{R,l-1}(z)=\tilde{\phi}_{R,l}(z)^T\tilde{V}_{R,l-1}^{-1}\tilde{\Phi}_{R,l-1}^T \overline{R}_{\star,l-1}$, ,
%$\sigma_{R,l}^2(z) = k_R(z,z)-k_{R,l}(z)^T(K_{R,l}+H I_{lH})^{-1}k_{R,l}(z)$ 

\begin{proof}
Note that under QFF approximation $\tilde{\sigma}_{R,l-1}^2(z)= H\tilde{\phi}_{R,l}(z)^T \tilde{V}_{R,l-1}^{-1}\tilde{\phi}_{R,l}(z)$, where  $\tilde{\phi}_{R,l}=\tilde{\phi}_R$ and $d_{R,l} =d_R$ for every episode $l$. Now define $\alpha_{R,l}(z)=k_{R,l}(z)^T(K_{R,l}+H I_{lH})^{-1}\overline{R}_{\star,l-1}$.
Then from \cite[Equation 7]{chowdhury2019bayesian}, we have $\abs{\overline{R}_\star(z)-\alpha_{R,l-1}(z)} \le B_R\sigma_{R,l-1}(z)$. Let 
$\epsilon_{d_R} \bydef \sup_{z,z' \in \cZ} \abs{k_R(z,z')-\tilde{\phi}_R(z)^T\tilde{\phi}_R(z')} < 1$. Then from \cite[Lemma 15]{chowdhury2019bayesian},
\beqan
\quad\abs{\alpha_{R,l-1}(z)-\tilde{\alpha}_{R,l-1}(z)} &=& O(B_RH\epsilon_{d_R} l^2),\\ \abs{\sigma_{R,l-1}(z) - \tilde{\sigma}_{R,l-1}(z)} &=&   O (\epsilon_{d_R}^{1/2}l).
\eeqan
Therefore, by the triangle inequality,
\beqa
\abs{\overline{R}_\star(z)-\tilde{\alpha}_{R,l-1}(z)} \le  B_R\sigma_{R,l-1}(z) + O(B_RH\epsilon_{d_R} l^2)\nonumber\\
= B_R\tilde{\sigma}_{R,l-1}(z)+O(B_RH\epsilon_{d_R}^{1/2}l^2).
\label{eqn:reward-ff}
\eeqa
From \cite[Theorem 1]{mutny2018efficient} $\epsilon_{d_R} \le (n+1)2^{n}\frac{1}{\sqrt{2}m_R^{m_R}}\left(\frac{e}{4l_R^2}\right)^{m_R}=O\left(\left(\frac{e}{4m_Rl_R^2}\right)^{m_R}\right)$, since $n=O(1)$. If $m_R \ge 1/l_R^2$, then $\epsilon_{d_R}=O\left((e/4)^{m_R}\right)$. Further if $m_R \ge \log_{4/e}(T^6)$, then $\epsilon_{d_R}=O(1/T^6)$ and thus, in turn, $H\epsilon_{d_R}^{1/2}l^2=O(1/H^2\tau)=O(1/T)$ for each $l \le \tau$. Now the result follows by combining \ref{eqn:reward1} and \ref{eqn:reward-ff} using the triangle inequality.
\end{proof}

\begin{mylemma}
\label{lem:four}
Let $m_P \ge \max\lbrace 1/l_P^2,\log_{4/e}(T^6)\rbrace$ and $\delta \in (0,1)$. Then, with probability at least $1-\delta/3$, uniformly over all $z \in \cZ$ and $1 \le l \le \tau$, 
\beqn
\abs{\overline{P}_\star(z)-\tilde{\mu}_{P,l-1}(z)} \le \beta_{P,l}\tilde{\sigma}_{P,l-1}(z) + O(B_P/T).
\eeqn
\end{mylemma}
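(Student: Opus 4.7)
The proof plan is to mirror the argument of Lemma \ref{lem:three} step-by-step, substituting the transition kernel $k_P$, the feature map $\tilde{\phi}_P$, the mean transition vector $\overline{P}_{\star,l-1}$, and the QFF approximation error $\epsilon_{d_P} \bydef \sup_{z,z' \in \cZ}\abs{k_P(z,z')-\tilde{\phi}_P(z)^T\tilde{\phi}_P(z')}$ in place of their reward counterparts. This is natural because in the Appendix we specialize to $\cS \subset \Real$ (i.e., $m=1$), so $\overline{P}_\star$ is a scalar-valued RKHS element on $\cZ$, and all the tools used for $\overline{R}_\star$ apply verbatim to $\overline{P}_\star$.

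More concretely, I would first note that under QFF the posterior standard deviation simplifies to $\tilde{\sigma}_{P,l-1}^2(z) = H\tilde{\phi}_P(z)^T \tilde{V}_{P,l-1}^{-1}\tilde{\phi}_P(z)$, since the data-independent feature map $\tilde{\phi}_{P,l}=\tilde{\phi}_P$ and dimension $d_{P,l}=d_P$ are fixed across episodes. I would then introduce the exact-kernel surrogate $\alpha_{P,l-1}(z)\bydef k_{P,l-1}(z)^T(K_{P,l-1}+HI_{(l-1)H})^{-1}\overline{P}_{\star,l-1}$ and the corresponding self-normalized concentration bound for the transition noise, giving the analog of \ref{eqn:reward1}:
\beqn
\abs{\tilde{\mu}_{P,l-1}(z)-\tilde{\alpha}_{P,l-1}(z)} \le \frac{\sigma_P}{\sqrt{H}}\sqrt{2\left(\ln(3/\delta)+\tfrac{1}{2}\ln\tfrac{\det(\tilde{V}_{P,l-1})}{\det (HI_{d_P})}\right)}\tilde{\sigma}_{P,l-1}(z),
\eeqn
uniformly in $z$ and $l$ with probability at least $1-\delta/6$ via \cite[Theorem 1]{abbasi2011improved}. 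Separately, \cite[Equation 7]{chowdhury2019bayesian} applied to $\overline{P}_\star \in \cH_{k_P}$ yields $\abs{\overline{P}_\star(z)-\alpha_{P,l-1}(z)} \le B_P\sigma_{P,l-1}(z)$.

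Next, I would invoke \cite[Lemma 15]{chowdhury2019bayesian} with the approximation gap $\epsilon_{d_P}$ to obtain $\abs{\alpha_{P,l-1}(z)-\tilde{\alpha}_{P,l-1}(z)}=O(B_P H \epsilon_{d_P}l^2)$ and $\abs{\sigma_{P,l-1}(z)-\tilde{\sigma}_{P,l-1}(z)}=O(\epsilon_{d_P}^{1/2}l)$, and combine via the triangle inequality to get
\beqn
\abs{\overline{P}_\star(z)-\tilde{\alpha}_{P,l-1}(z)} \le B_P \tilde{\sigma}_{P,l-1}(z) + O(B_P H \epsilon_{d_P}^{1/2}l^2).
\eeqn
The final step is to quantify $\epsilon_{d_P}$ using \cite[Theorem 1]{mutny2018efficient}: since $k_P$ is SE on $[0,1]^{n+1}$ with lengthscale $l_P$, one gets $\epsilon_{d_P}=O\bigl((e/(4m_P l_P^2))^{m_P}\bigr)$. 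The hypotheses $m_P\ge 1/l_P^2$ and $m_P\ge \log_{4/e}(T^6)$ then force $\epsilon_{d_P}=O(1/T^6)$, so $H\epsilon_{d_P}^{1/2}l^2=O(1/T)$ for $l\le\tau$, giving the advertised $O(B_P/T)$ residual. A final triangle inequality between $\overline{P}_\star(z)-\tilde{\alpha}_{P,l-1}(z)$ and $\tilde{\alpha}_{P,l-1}(z)-\tilde{\mu}_{P,l-1}(z)$, with a union bound over the two $\delta/6$-events, closes the argument at confidence $1-\delta/3$.

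The proof is essentially a mechanical transfer from Lemma \ref{lem:three}; the only place requiring slight care is verifying that \cite[Lemma 15]{chowdhury2019bayesian} and \cite[Equation 7]{chowdhury2019bayesian} apply to $\overline{P}_\star$ (they do, since in the Appendix setting $m=1$ and $\overline{P}_\star$ is a scalar RKHS function on $\cZ$ with $\norm{\overline{P}_\star}_{k_P}\le B_P$ and $k_P(z,z)\le 1$), and that the sub-Gaussian self-normalized bound of \cite{abbasi2011improved} applies to the per-coordinate transition noise $\epsilon_{P,l,h}$ in exactly the same way as to $\epsilon_{R,l,h}$. These are all straightforward given the assumptions in Section \ref{sec:assumptions}, so I do not anticipate any genuine obstacle beyond careful bookkeeping of constants.
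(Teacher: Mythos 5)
Your proposal is correct and matches the paper's intent exactly: the paper's own proof of Lemma~\ref{lem:four} is simply ``similar to that of Lemma~\ref{lem:three},'' and your step-by-step transfer (exact-kernel surrogate $\alpha_{P,l-1}$, the self-normalized concentration bound, \cite[Equation 7]{chowdhury2019bayesian}, \cite[Lemma 15]{chowdhury2019bayesian} with $\epsilon_{d_P}$, and \cite[Theorem 1]{mutny2018efficient}) is precisely the argument being invoked, with the key observation that $m=1$ makes $\overline{P}_\star$ a scalar RKHS function correctly identified. The only quibble is minor bookkeeping of the confidence levels (in the QFF case the feature map is deterministic, so essentially one probabilistic event at level $\delta/3$ suffices rather than a union over two $\delta/6$-events), which does not affect correctness.
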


\begin{proof}
The proof is similar to that of Lemma \ref{lem:three}.
\end{proof}

\begin{mytheorem}[Regret bound for Kernel-UCRL]
\label{thm:regret-bound-FF}
Let the assumptions in Section \ref{sec:assumptions} hold. Further, let $m=1$, $n=O(1)$ and $k_R,\;k_P$ are SE kernels on $[0,1]^{n+1}$ with length scale parameters $l_R,\;l_P$, respectively. Then, for any $\delta \in (0,1)$, Kernel-UCRL with QFF approximation enjoys, with probability at least $1-\delta$, the regret bound 
\beqan
&\cR(T) = O \Big((B_R+B_P)+(LD+B_RH) \sqrt{T\ln(1/\delta)} +\\ & C_{R,T}\sqrt{HT(\ln T)^{n+2}}+LC_{P,T}\sqrt{HT(\ln T)^{n+2}}\Big),
\eeqan
where $C_{P,T} = O\left(B_P+\frac{\sigma_P}{\sqrt{H}}\sqrt{\ln(1/\delta)+(\ln  T)^{n+2}}\right)$ and $C_{R,T} = O\left(B_R+\frac{\sigma_R}{\sqrt{H}}\sqrt{\ln(1/\delta)+(\ln T)^{n+2}}\right)$.
\end{mytheorem}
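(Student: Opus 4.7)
The plan is to mimic the argument already used for Theorem \ref{thm:regret-bound-Nystrom}, substituting the QFF confidence sets for the Nystr\"om ones and exploiting the fact that, after the QFF embedding, the problem is literally a $d_R$- (respectively $d_P$-) dimensional linear bandit with $d_R = 2 m_R^{n+1} = O((\ln T)^{n+1})$ and similarly for $d_P$.

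First I would define the good events
\[
\cE_{R,l} := \{\abs{\overline{R}_\star(z)-\tilde\mu_{R,l-1}(z)} \le \beta_{R,l}\tilde\sigma_{R,l-1}(z)+O(B_R/T), \forall z\}
\]
and the analogous $\cE_{P,l}$ for the transition, which hold simultaneously for all $l\le\tau$ with probability at least $1-2\delta/3$ by Lemmas~\ref{lem:three} and~\ref{lem:four}. Under these events, the true MDP $M_\star$ belongs to $\cM_l$ (since its mean functions satisfy the defining inequalities up to the $O(B_{R,P}/T)$ slack absorbed into the confidence set), so optimism gives the per-episode bound analogous to \ref{eqn:GP-UCRL}. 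Then the decomposition and Azuma-type concentration from \cite[Lemmas 7 and 9]{chowdhury2019online} yield, with probability $\ge 1-\delta$,
\[
\cR(T) \le (LD+2B_RH)\sqrt{2T\ln(3/\delta)} + \sum_{l,h}\Big(|\overline{R}_{M_l}(z_{l,h})-\overline{R}_\star(z_{l,h})| + L_{M_l}\|\overline{P}_{M_l}(z_{l,h})-\overline{P}_\star(z_{l,h})\|_2\Big),
\]
and triangle inequality against $\tilde\mu_{R,l-1}$, $\tilde\mu_{P,l-1}$ bounds each summand by $2\beta_{R,l}\tilde\sigma_{R,l-1}(z_{l,h})+O(B_R/T)$ and $2\beta_{P,l}\tilde\sigma_{P,l-1}(z_{l,h})+O(B_P/T)$, respectively. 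The stray $O(B_R/T)$ and $O(B_P/T)$ pieces sum over the $\tau H=T$ period-episode pairs to contribute only the constant $O(B_R+B_P)$ shown in the bound.

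The remaining work is to bound the two leading sums. Since $\tilde\phi_R,\tilde\phi_P$ are \emph{data-independent} feature maps of fixed dimensions $d_R,d_P$, the standard elliptical-potential argument (\cite[Lemma 11]{abbasi2011improved}) applies directly to $\tilde V_{R,l-1}$ and $\tilde V_{P,l-1}$, yielding
\[
\sum_{l=1}^{\tau}\sum_{h=1}^{H}\tilde\sigma_{R,l-1}(z_{l,h})\le \sqrt{2HT\cdot\tfrac12\ln\det(\tilde V_{R,\tau-1}/(HI_{d_R}))} \;=\; O\!\left(\sqrt{HT\,d_R\ln T}\right),
\]
and analogously for the transition, because $\tfrac12\ln\det(\tilde V_{R,l-1}/(HI_{d_R}))=O(d_R\ln(lH))=O((\ln T)^{n+2})$. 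Plugging the same log-determinant estimate into the definition of $\beta_{R,l},\beta_{P,l}$ gives the expressions for $C_{R,T}$ and $C_{P,T}$ in the statement, and collecting all the pieces (together with $L_{M_l}\le L$ which holds by construction of $\cM_l$) produces the advertised bound.

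The main obstacle I expect is bookkeeping rather than mathematical: one has to verify that the $O(B_{R,P}/T)$ approximation slack from Lemmas~\ref{lem:three} and~\ref{lem:four} is genuinely absorbed by the confidence sets $\cC_{R,l},\cC_{P,l}$ as defined in the QFF subsection, so that $M_\star\in\cM_l$ and optimism still goes through; and that the choice $m_R,m_P\ge\log_{4/e}(T^6)$ is exactly what is needed to make $H\epsilon_{d}^{1/2}\tau^2=O(1/T)$ uniformly in the episode index, so that the approximation error telescopes to an $O(B_R+B_P)$ additive constant rather than growing with $T$. Once these checks are in place, everything else is an immediate specialization of the Nystr\"om argument to the fixed-dimensional linear-bandit case with $d_{R,P}=O((\ln T)^{n+1})$.
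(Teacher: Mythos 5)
Your proposal is correct and follows essentially the same route as the paper's own proof: the same good events with the $O(B_R/T)$, $O(B_P/T)$ slack, the same optimism-plus-martingale decomposition via \cite[Lemmas 7 and 9]{chowdhury2019online}, the same triangle-inequality step, and the same final accounting with $d_R, d_P = O((\ln T)^{n+1})$. The only cosmetic difference is that you invoke the elliptical-potential lemma of \cite{abbasi2011improved} directly for the predictive-variance sums, whereas the paper routes this through \cite[Lemma 11]{chowdhury2019online} specialized to a linear kernel with $\gamma_t(k)=O(d\ln t)$ --- these are the same argument.
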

\begin{proof}
For each episode $l$, define the following events:
\beqan
\cE_{R,l}\bydef\big\lbrace  \abs{\overline{R}_\star(z)-\tilde{\mu}_{R,l-1}(z)}\le \beta_{R,l}\tilde{\sigma}_{R,l-1}(z)+O(B_R/T), \forall z \big\rbrace,\\
\cE_{P,l} \bydef\big\lbrace \abs{\overline{P}_\star(z)-\tilde{\mu}_{P,l-1}(z)}\le \beta_{P,l}\tilde{\sigma}_{P,l-1}(z)+O(B_P/T), \forall z\big\rbrace.
\eeqan 
When the events $\cE_{R,l}$ and $\cE_{P,l}$ are true for all episodes $l \in [\tau]$, then using a similar approach as in the proof of Theorem \ref{thm:regret-bound-Nystrom} we can show that for any $0 < \delta < 1$, with probability at least $1-\delta/3$,
\beq
\label{eqn:regret-one-ff}
\begin{aligned}
&\cR(T)\le (LD+2B_RH) \sqrt{2T \ln(3/\delta)}+\\
&\sum_{l=1}^{\tau}\sum_{h=1}^{H}\Big( \abs{\overline{R}_{M_l}(z_{l,h}) - \overline{R}_\star(z_{l,h})}+L \abs{\overline{P}_{M_l}(z_{l,h}) - \overline{P}_\star(z_{l,h})}\Big).
\end{aligned}
\eeq
Also for every episode $l$ the following holds:
\beqan
\label{eqn:regret-two-ff}
\abs{\overline{R}_{M_l}(z_{l,h}) - \overline{R}_\star(z_{l,h})} &\le& 2\beta_{R,l}\;\tilde{\sigma}_{R,l-1}(z_{l,h})+O(B_R/T),\\
\abs{\overline{P}_{M_l}(z_{l,h}) - \overline{P}_\star(z_{l,h})}
&\le& 2 \beta_{P,l}\tilde{\sigma}_{P,l-1}(z_{l,h}) + O(B_P/T).
\eeqan
By our choice of $m_R$ and $m_P$, Lemmas \ref{lem:three} and \ref{lem:four} together imply that the events $\cE_{R,l}$ and $\cE_{P,l}$ are true for all episodes $l \in [\tau]$. Further, since $\frac{1}{2}\ln\frac{\det(\tilde{V}_{R,l-1})}{\det (HI_{d_{R}})}=O(d_{R}\ln(lH))$ and $\frac{1}{2}\ln\frac{\det(\tilde{V}_{P,l-1})}{\det (HI_{d_{P}})}=O(d_{P}\ln(lH))$, $\beta_{R,l}$ and $\beta_{P,l}$ are non-decreasing functions in $l$.
 Now, combining \ref{eqn:regret-one-ff} and \ref{eqn:regret-two-ff}, and applying a union bound we have, with probability at least $1-\delta$, that
\beqn
\begin{aligned}
&\cR(T) \le O(B_R+B_P)+(LD+2B_RH) \sqrt{2T\ln(3/\delta)}\\ & 2\beta_{R,\tau}\sum_{l=1}^{\tau}\sum_{h=1}^{H} \tilde{\sigma}_{R,l-1}(z_{l,h})+
 2L\beta_{P,\tau}\sum_{l=1}^{\tau}\sum_{h=1}^{H}\tilde{\sigma}_{P,l-1}(z_{l,h}).
\end{aligned}
\eeqn
From \cite[Lemma 11]{chowdhury2019online}, $\tilde{\sigma}_{R,l-1}(z_{l,h}) = O\left(\sqrt{HTd_R\ln T}\right)$ and $\tilde{\sigma}_{P,l-1}(z_{l,h}) = O\left(\sqrt{HTd_P\ln T}\right)$, since $\gamma_t(k)=O(d\ln t)$ for any linear kernel $k$ defined over $\Real^d$. Now the result follows by noting that $d_R =(m_R)^{n+1}= O\left((\ln T)^{n+1}\right)$ and $d_P =(m_P)^{n+1}= O\left((\ln T)^{n+1}\right)$ for $n=O(1)$.
\end{proof}

\addtolength{\textheight}{-6cm}

%For the QFF approximation, from Lemma \ref{lem:qff}, we have  $\epsilon_m = O((e/4)^{\bar{m}})$ if $\bar{m} > 1/l^2$ and $d=O(1)$. Now for $\bar{m}=2\log_{4/e}(T^3)$, we have $\epsilon_m^{1/2}T^3=O(1)$ and $m = O((\ln T)^d)$. For the RFF approximation, we have $\epsilon_m = O_p(1/\sqrt{m})$ if $d=O(1)$. Now in order to make the last term $\epsilon_m^{1/2}T^3$ behave as $O(1)$, we have to take $m=O(T^{12})$ features which will eventually blow up the first two terms by the same order. Hence, we will never achieve sub-linear regret bound using RFF approximation.

\section*{ACKNOWLEDGMENT}
Sayak Ray Chowdhury is supported by a Google PhD Fellowship. Aditya Gopalan is grateful
for support from the DST INSPIRE faculty grant IFA13-
ENG-69.

%%%%%%%%%%%%%%%%%%%%%%%%%%%%%%%%%%%%%%%%%%%%%%%%%%%%%%%%%%%%%%%%%%%%%%%%%%%%%%%%

\bibliographystyle{IEEEtran}
\bibliography{IEEEabrv,paper}

\end{document}